\def\finalversion{0}
\newtcolorbox[use counter=mynote]
  {mynote}[1][]
  {title=ToDo~\thetcbcounter,
   width=1.3in, 
   left=0pt,
   right=0pt,
   fonttitle=\bfseries\color{black},
   colframe=pink,
   colback=pink!10,
   #1
}
\newcommand{\solicitation}[1]{\if\finalversion1 {} \else \begin{mdframed}[hidealllines=true,backgroundcolor=blue!20]From Solicitation: \textit{#1}\end{mdframed} \fi}
\newtheorem{theorem}{Theorem}[section]
\newtheorem{lemma}[theorem]{Lemma}
\DeclareMathOperator{\Div}{div}
\title{A Unified Framework for Diffusion Bridge Problems: Flow Matching and Schr\"{o}dinger Matching into One
}
\author{%
  Minyoung Kim
  \\
  Samsung AI Center Cambridge, UK \\
  \texttt{mikim21@gmail.com} \\
}
\begin{document}

\maketitle

\begin{abstract}
The {\em bridge problem} is to find an SDE (or sometimes an ODE) that bridges two given distributions. The application areas of the bridge problem are enormous, among which the recent generative modeling (e.g., conditional or unconditional image generation) is the most popular. Also the famous Schr\"{o}dinger bridge problem, a widely known problem for a century, is a special instance of the bridge problem. Two most popular algorithms to tackle the bridge problems in the deep learning era are: {\em (conditional) flow matching} and {\em iterative fitting} algorithms, where the former confined to ODE solutions, and the latter specifically for the Schr\"{o}dinger bridge problem. The main contribution of this article is in two folds: i) \textbf{We provide concise reviews of these algorithms with technical details to some extent}; ii) \textbf{We propose a novel unified perspective and framework that subsumes these seemingly unrelated algorithms (and their variants) into one}.  In particular, we show that our unified framework can instantiate the Flow Matching (FM) algorithm, the (mini-batch) optimal transport FM algorithm, the (mini-batch) Schr\"{o}dinger bridge FM algorithm, and the deep Schr\"{o}dinger bridge matching (DSBM) algorithm as its special cases. We believe that this unified framework will be useful for viewing the bridge problems in a more general and flexible perspective, and in turn can help researchers and practitioners to develop new bridge algorithms in their fields. 
\end{abstract}


\section{(Diffusion) Bridge Problems}\label{sec:overview}

The \textbf{diffusion bridge problem}, or simply the \textbf{bridge problem}, can be defined as follows.

\textbf{$\bullet$ Bridge problem.}
Given two distributions $\pi_0(\cdot)$ and $\pi_1(\cdot)$ in $\mathbb{R}^d$, find an SDE, more specifically, find the drift function $u_t(x)$ where $u:\mathbb{R}[0,1]\times \mathbb{R}^d \to \mathbb{R}^d$ with a specified diffusion coefficient $\sigma$,
\begin{align}
dx_t = u_t(x_t) dt + \sigma dW_t, \ x_0\!\sim\!\pi_0(\cdot)
\label{eq:bp_sde_fwd}
\end{align}
that yields $x_1\!\sim\!\pi_1(\cdot)$. Here $\{W_t\}_t$ is the Wiener process or the Brownian motion. 
Alternatively one can aim to find a reverse-time SDE (or both). That is, find $\overline{u}_t(x)$ in
\begin{align}
\overleftarrow{d}x_t = \overline{u}_t(x_t) dt + \sigma \overleftarrow{d}W_t, \ x_1\!\sim\!\pi_1(\cdot)
\label{eq:bp_sde_rev}
\end{align}
that yields $x_0\!\sim\!\pi_0(\cdot)$.

Note that if we specify $\sigma\!=\!0$, then our goal is to find an ODE that bridges the two distributions $\pi_0(\cdot)$ and $\pi_1(\cdot)$. 
Once solved, the solution to the bridge problem can give us the ability to sample from one of the $\pi_{\{0,1\}}$ given the samples from the other, simply by integrating the learned SDE.
The application areas of the bridge problem are enormous, among which the generative modeling (e.g., conditional or unconditional image generation) is the most popular. 
For instance, in typical generative modeling, $\pi_0$ is usually a tractable density like Gaussian, while $\pi_1$ is a target distribution that we want to sample from. In the bridge problem, however, $\pi_0$ can also be an arbitrary distribution beyond tractable densities like Gaussians, and we do not make any particular assumption on $\pi_0$ and $\pi_1$ as long as we have samples from the two distributions. 

\textbf{$\bullet$ Two instances of the bridge problem.}
There are two interesting special instances of the bridge problem: the \textbf{Schr\"{o}dinger bridge problem} and the \textbf{ODE bridge problem}. 
\begin{itemize}
\item \textbf{ODE bridge problem.} We strictly restrict ourselves to ODEs, i.e., $\sigma=0$. 
\item \textbf{Schr\"{o}dinger bridge problem.} With $\sigma\!>\!0$, there is an additional constraint that the path measure of the SDE (\ref{eq:bp_sde_fwd}), denoted by $P^u$, is closest to a given reference SDE path measure $P^{ref}$. That is, 
\begin{align}
\min_{u} \ \textrm{KL}(P^u || P^{ref}) \ \ \textrm{s.t.} \ P^u_0(x_0) = \pi_0(x_0), P^u_1(x_1) = \pi_1(x_1)
\label{eq:sb}
\end{align}
where $P_t$ of a path measure $P$ indicates the marginal distribution at time $t$. 
In this case to have finite KL divergence, $\sigma$ of $P^u$ has to be set equal to $\sigma_{ref}$ of $P^{ref}$, i.e., $\sigma = \sigma_{ref}$, (from the Girsanov theorem). 
\end{itemize}

\section{Flow Matching and Schr\"{o}dinger Bridge Matching Algorithms}\label{sec:background}

Among several existing algorithms that aim to solve the bridge problems, in this article we focus on two recent matching algorithms: {\em (conditional) flow matching}~\citep{fm,cfm,stoch_interp,rectified_fm} and {\em Schr\"{o}dinger bridge matching} algorithms~\citep{bortoli21,vargas21,dsbm}. These algorithms were developed independently: the former aimed to solve the ODE bridge problem while the latter the Schr\"{o}dinger bridge problem. 
In this section we review the algorithms focusing mainly on the key ideas with some technical details, but being mathematically less rigorous for better readability. 

In Sec.~\ref{sec:unified}, we propose a novel unified framework that subsumes these two seemingly unrelated algorithms and their variants into one.

\subsection{(Conditional) Flow Matching for ODE Bridge Problems}\label{sec:fm_cfm}

The Flow Matching (FM)~\citep{fm} or its extension Conditional Flow Matching (CFM)~\citep{cfm} is one promising way to solve the ODE bridge problem. 
The key idea of the FM is quite intuitive. We first design some marginal distribution path $\{P_t(x_t)\}_t$ with the boundary conditions $P_0\!=\!\pi_0$, $P_1\!=\!\pi_1$. We then derive the ODE $dx_t = u_t(x_t) dt$ that yields $\{P_t(x_t)\}_t$ as its marginal distributions. The drift $u_t(x_t)$ is approximated by a neural network $v_\theta(t,x_t)$ with parameters $\theta$ by solving:
\begin{align}
\min_\theta \ \mathbb{E}_{t,x_t\sim P_t} || u_t(x_t) - v_\theta(t,x_t) ||^2
\label{eq:fm_optim}
\end{align}
Once solved, we can generate samples from $\pi_1$ (or $\pi_0$) approximately by simulating: $dx_t = v_\theta(t,x_t) dt$, $x_0\!\sim\!\pi_0$ (resp., $x_1\!\sim\!\pi_1$).
However, one of the main limitations of this strategy is that designing the marginal path $\{P_t(x_t)\}_t$ satisfying the boundary condition is often difficult. And it is this issue that motivated the CFM.

\textbf{$\bullet$ Conditional Flow Matching (CFM).}
To make the path design easier, we introduce some latent random variable $z$ to condition $x_t$. Although CFM derivations hold regardless of the choice of $z$, it is typically chosen as the terminal random variates $z=(x_0,x_1)$, 
and we will follow this practice and notation.
Specifically, in CFM we design the so-called {\em pinned marginal path} $\{P(x_t|x_0,x_1)\}_t$ and the {\em coupling distribution} $Q(x_0,x_1)$ subject to the condition $P_0(x_0)=\pi_0(x_0)$, $P_1(x_1)=\pi_1(x_1)$ where $P_t(x_t)$ is defined as
\begin{align}
P_t(x_t) := \int P_t(x_t|x_0,x_1) Q(x_0,x_1) d(x_0,x_1)
\label{eq:cfm_marginal}
\end{align}
Then we derive the ODE $dx_t = u_t(x_t|x_0,x_1) dt$ that yields $\{P(x_t|x_0,x_1)\}_t$ as its marginal distributions for each $(x_0,x_1)$, which admits a closed form if $\{P(x_t|x_0,x_1)\}_t$ are Gaussians~\citep{fm}. We then approximate $\mathbb{E}[u_t(x_t|x_0,x_1)|x_t]$, the conditional expectation derived from the joint $P(x_t|x_0,x_1) Q(x_0,x_1)$, by a neural network $v_\theta(t,x_t)$ by solving the following optimization:
\begin{align}
\min_\theta \ \mathbb{E} \ || u_t(x_t|x_0,x_1) - v_\theta(t,x_t) ||^2
\label{eq:cfm_optim}
\end{align}
where the expectation is taken with respect to the joint $P(x_t|x_0,x_1) Q(x_0,x_1)$ and uniform $t$. 
Surprisingly, it can be shown~\citep{cfm} that the gradient of the objective in (\ref{eq:cfm_optim}) coincides with that in (\ref{eq:fm_optim}) for $u_t(x_t)$ defined as:
\begin{align}
u_t(x) = \frac{1}{P_t(x_t)} \mathbb{E}_{Q(x_0,x_1)}[u_t(x_t|x_0,x_1)P_t(x_t|x_0,x_1)]
\end{align}
hence sharing the same training dynamics as (\ref{eq:fm_optim}). Therefore the optimal $v_\theta(t,x_t)$ of (\ref{eq:cfm_optim}) is a good estimate for $u_t(x_t)$. Since the ODE $dx_t = u_t(x_t) dt$ admits $\{P_t(x_t)\}_t$ as its marginal distributions, so does $dx_t = v_\theta(t,x_t) dt$ approximately. 

Many existing flow matching variants including FM~\citep{fm}, Stochastic Interpolation~\citep{stoch_interp}, and Rectified FM~\citep{rectified_fm} can be viewed as special instances of this CFM framework. For instance, these models can be realized by having a straight line (linear interpolation) pinned marginal path (\ref{eq:linear_pinned}) with vanishing variance while one boundary (e.g., $\pi_0$) is fixed as standard normal $\mathcal{N}(0,I)$.

\textbf{$\bullet$ Limitations of CFM.}
A reasonable choice for the coupling distribution $Q(x_0,x_1)$ is the Optimal Transport (OT) or the entropic OT between $\pi_0$ and $\pi_1$. The pinned marginal $P_t(x_t|x_0,x_1)$ can be chosen as a Gaussian with the linear interpolation between $x_0$ and $x_1$ as its mean, more specifically, 
\begin{align}
P_t(x_t|x_0,x_1) = \mathcal{N}(t x_1 + (1-t) x_0, \beta_t^2 I)
\label{eq:linear_pinned}
\end{align}
for some scheduled variances $\beta_t^2$.
When the combination of the entropic OT $Q(x_0,x_1)$ and $\beta_t = \sigma_{ref} \sqrt{t(1-t)}$ is used, it can be shown that the marginals $\{P_t(x_t)\}_t$ coincide with the marginals of the Schr\"{o}dinger bridge with the Brownian motion reference $P^{ref}: dx_t=\sigma_{ref} dW_t$. However, the main limitations of CFM~\citep{cfm} are: i) CFM solutions are confined to ODEs, hence unable to find the optimal SDE solution to general bridge problems including the Schr\"{o}dinger bridge problem; ii) CFM itself does not provide a recipe about how to solve the entropic OT problem exactly -- what is called SB-CFM proposed in~\citep{cfm} only approximates it with the Sinkhorn-Knopp solution for minibatch data, which is usually substantially different from the population entropic OT solution.

\subsection{Schr\"{o}dinger Bridge Problem}\label{sec:sb}

The Schr\"{o}dinger bridge problem can be defined as (\ref{eq:sb}) where we assume a zero-drift Brownian SDE with diffusion coefficient $\sigma_{ref}$ for the reference path measure $P^{ref}$ throughout the paper. That is,
\begin{align}
P^{ref}: dx_t = \sigma_{ref} dW_t, \ x_0\!\sim\!\pi_0(\cdot)
\label{eq:brownian}
\end{align}
We denote by $P^{SB}$ the Schr\"{o}dinger bridge path measure, i.e., the solution to (\ref{eq:sb}). In the literature, there are two well-known views for $P^{SB}$: the {\em static view} and the {\em optimal control} view. 

The static view has a direct link to the entropic optimal transport (EOT) solution, more specifically
\begin{align}
P^{SB}(\{x_t\}_{t\in[0,1]}) = P^{EOT}(x_0,x_1) \cdot P^{ref}(\{x_t\}_{t\in(0,1)}|x_0,x_1)
\label{eq:sb_static}
\end{align}
where $P^{EOT}(x_0,x_1)$ is the EOT joint distribution solution with the negative entropy regularizing coefficient $2\sigma_{ref}^2$. More formally, 
\begin{align}
P^{EOT}(x_0,x_1) &= \arg\min_{P(x_0,x_1)} \mathbb{E}_{P(x_0,x_1)} ||x_0-x_1||^2 - 2\sigma_{ref}^2 \mathbb{H}(P(x_0,x_1)) \\
& \ \ \ \ \ \ \ \ \ \ \ \ \ \ \ \ \textrm{s.t.} \ \ \ \ P(x_0) = \pi_0(x_0), \ P(x_1) = \pi_1(x_1)
\end{align}
where $\mathbb{H}$ indicates the Shannon entropy. 
We call $P^{ref}(\cdot|x_0,x_1)$ the {\em pinned reference process}, which admits a closed-form Gaussian expression for the specific choice (\ref{eq:brownian}). Although the product form (\ref{eq:sb_static}), i.e., the product of a boundary joint distribution and a pinned path measure, does not in general become Markovian (e.g., It\^{o} SDE representable), the Schr\"{o}dinger bridge is a well-known exception where there exists a unique SDE that yields $P^{SB}$ as its path measure.   

Alternatively, it is not difficult to derive an optimal control formulation for the Schr\"{o}dinger bridge problem. Specifically, $P^{SB}$ can be described by the SDE that has the minimum kinetic energy among those that satisfy the bridge constraint. Letting
\begin{align}
P^{v}: dx_t = v_t(x_t)dt + \sigma_{ref} dW_t, \ x_0\!\sim\!\pi_0(\cdot)
\label{eq:pv}
\end{align}
we have $P^{SB} = P^{v^*}$ where $v^*$ is the minimizer of the following problem:
\begin{align}
\min_{v} \ \mathbb{E}_{P^v}\Bigg[\int_0^1 \frac{1}{2\sigma_{ref}^2} ||v_t(x_t)||^2 dt \Bigg] \ \ \textrm{s.t.} \ \ P^v_1(x_1) = \pi_1(x_1)
\end{align}

Next we summarize two recent algorithms that solve the Schr\"{o}dinger bridge problem exactly (at least in theory): Iterative Proportional Filtering (IPF) and Iterative Markovian Fitting (IMF).

\subsection{Iterative Proportional Filtering (IPF)}\label{sec:ipf}

IPF aims to solve the Schr\"{o}dinger bridge problem (\ref{eq:sb}) by alternating the forward and reverse half bridge (HB) problems until convergence. More specifically, with initial $P^0 = P^{ref}$, we solve the followings for $n=1,2,\dots$ 
\begin{align}
&\textrm{(Reverse HB)} \ \ P^{2n-1} = \arg\min_{P} \ \textrm{KL}(P || P^{2n-2}) \ \ \textrm{s.t.} \ P_1(x_1) = \pi_1(x_1) \label{eq:ipf_rev} \\
&\textrm{(Forward HB)} \ \ P^{2n} = \arg\min_{P} \ \textrm{KL}(P || P^{2n-1}) \ \ \textrm{s.t.} \ P_0(x_0) = \pi_0(x_0) 
\label{eq:ipf_fwd}
\end{align}
It can be shown that $\lim_{n\to\infty} P^n \to P^{SB}$~
\citep{fortet40,kullback68,ruschendorf95}. 
It is not difficult to show that the optimal solution of (\ref{eq:ipf_rev}) or (\ref{eq:ipf_fwd}) can be attained by time-reversing the SDE of the previous iteration.
This fact was exploited recently in~\citep{bortoli21,vargas21} to yield neural-network based IPF algorithms where the score $\nabla \log P^n(x)$ that appears in time reversal is estimated either by regression estimation~\citep{bortoli21} or maximum likelihood estimation~\citep{vargas21}. 
However, the main drawback of these IPF algorithms is that they are simulation-based methods, thus very expensive to train.

\subsection{Iterative Markovian Fitting (IMF)}\label{sec:imf}

Recently in~\citep{dsbm}, the concept of path measure projection was introduced, specifically the Markovian and reciprocal projections that preserve the boundary marginals of the path measure. This idea was developed into a novel matching algorithm called the {\em iterative Markovian fitting} (IMF) that alternates applying the two projections starting from the initial path measure. Not only is it shown to converge to $P^{SB}$, but the algorithm is computationally more efficient than IPF without relying on simulation-based learning. A practical version of the algorithm is dubbed {\em Deep Schr\"{o}dinger Bridge Matching} (DSBM). 

We begin with discussing the two projections.

\textbf{$\bullet$ Reciprocal projection.} 
They define the {\em reciprocal class} of path measures to be the set of path measures that admit $P^{ref}(\cdot|x_0,x_1)$ as their pinned conditional path measures. That is, the {\em reciprocal class} $\mathcal{R}$ is defined as:
\begin{align}
\mathcal{R} = \{P: P(x_0,x_1) P^{ref}(\cdot|x_0,x_1) \}
\end{align}
The {\em reciprocal projection} of a path measure $P$, denoted by $\Pi_\mathcal{R}(P)$, is defined as the path measure in the reciprocal class that is closest to $P$ in the KL divergence sense. Formally, 
\begin{align}
\Pi_\mathcal{R}(P) = \arg\min_{R\in\mathcal{R}} \textrm{KL}(P||R) = P(x_0,x_1)P^{ref}(\cdot|x_0,x_1)
\end{align}
where the latter equality can be easily derived from the KL decomposition property. So it basically says that the reciprocal projection of $P$ is simply done by replacing $P(\cdot|x_0,x_1)$ by that of $P^{Ref}$ while keeping the coupling $P(x_0,x_1)$.

\textbf{$\bullet$ Markovian projection.} 
They also define the {\em Markovian class} as the set of any SDE-representable path measures with diffusion coefficient $\sigma_{ref}$. That is,
\begin{align}
\mathcal{M} = \{P: dx_t = g_t(x_t) dt + \sigma_{ref} dW_t \ \ \textrm{for any vector field} \ g \}
\end{align}
The {\em Markovian projection} of a path measure $P$, denoted by $\Pi_\mathcal{M}(P)$, is defined similarly as the path measure in the Markovian class that is closest to $P$ in the KL divergence sense,
\begin{align}
\Pi_\mathcal{M}(P) = \arg\min_{M\in\mathcal{M}} \textrm{KL}(P||M)
\end{align}
In~\citep{dsbm} (Proposition 2 therein), it was shown that $\Pi_\mathcal{M}(P)$ can be expressed succinctly for reciprocal path measures $P$. Specifically, for $P\in\mathcal{R}$, we have $\Pi_\mathcal{M}(P) = P^{v^*}$ where $P^{v^*}$ is described by the SDE: $dx_t = v^*_t(x_t) dt + \sigma_{ref} dW_t, \ x_0\!\sim\!P(x_0)$ where
\begin{align}
v^*_t(x_t) = \mathbb{E}_{P(x_1|x_t)} \Big[ \sigma_{ref}^2 \nabla_{x_t} \log P^{ref}(x_1|x_t) \Big] = \mathbb{E}_{P(x_1|x_t)} \bigg[ \frac{x_1-x_t}{1-t}\bigg]
\label{eq:markovian_proj_drift}
\end{align}
where the latter equality comes immediately from the closed-form $P^{ref}(x_1|x_t)=\mathcal{N}(x_t, \sigma_{ref}^2(1-t)I)$. 
Also it was shown that the marginals are preserved after the projection, that is, $P^{v^*}_t(\cdot) = P_t(\cdot)$ for all $t\in[0,1]$. This means that applying any number of Markovian (and also reciprocal) projections to a path measure $P$ always preserves the boundary marginals $P_0(x_0)$ and $P_1(x_1)$. And this is one of the key theoretical underpinnings of their algorithms called IMF and its practical version DSBM (details below) to solve the Schr\"{o}dinger bridge problem. 

\textbf{$\bullet$ IMF and DSBM algorithms.} 
Conceptually the IMF algorithm can be seen as a successive alternating application of the Markovian and reciprocal projections, starting from any initial path measure $P^0$ that satisfies $P^0(x_0)=\pi_0(x_0)$ and $P^0(x_1)=\pi_1(x_1)$ (e.g., $P^0 = \pi_0(x_0) \pi_1(x_1) P^{ref}(\cdot|x_0,x_1)$ is a typical choice). That is, for $n=1,2,\dots$
\begin{align}
P^{2n-1} = \Pi_{\mathcal{M}}(P^{2n-2}), \ \ \ \ P^{2n} = \Pi_{\mathcal{R}}(P^{2n-1})
\end{align}
Not only do all $\{P^n\}_{n\geq 0}$ meet the boundary conditions (i.e., $P^n_0=\pi_0$, $P^n_1=\pi_1$), 
it can be also shown that they keep getting closer to $P^{SB}$, and converge to $P^{SB}$ (i.e., $\textrm{KL}(P^{n+1}||P^{SB}) \leq \textrm{KL}(P^{n}||P^{SB})$ and $\lim_{n\to\infty} P^{n} = P^{SB}$)~\citep{dsbm} (Proposition 7 and Theorem 8 therein). 
The reciprocal projection is straightforward as it only requires sampling from the pinned process $P^{ref}(\cdot|x_0,x_1)$ that is done by running $dx_t = \frac{x_1-x_t}{1-t} dt + \sigma_{ref} dW_t$ with $(x_0,x_1)$ taken from the previous path measure. However, the Markovian projection involves the difficult $P(x_1|x_t)$ in (\ref{eq:markovian_proj_drift}) from the previous path measure $P$. To circumvent $P(x_1|x_t)$, they used the regression theorem by introducing a neural network $v_\theta(t,x)$ to approximate $v^*_t(x)$ and optimizing the following:
\begin{align}
\arg\min_\theta 
\int_0^1 \mathbb{E}_{P(x_t,x_1)} \big\vert\big\vert v_\theta(t,x_t) - \sigma_{ref}^2 \nabla_{x_t} \log P^{ref}(x_1|x_t) \big\vert\big\vert^2 dt
\label{eq:imf_nnet_optim}
\end{align}
where now the cached samples $(x_1,x_t)$ from the previous path measure $P$ can be used to solve (\ref{eq:imf_nnet_optim}). 
Although theoretically $v_{\theta^*}(t,x) = v^*_t(x)$ with ideally rich neural network capacity and perfect optimization, in practice due to the neural network approximation error, the boundary condition is not satisfied, i.e., $P^{2n-1}_1 \neq \pi_1$. Hence to mitigate the issue, they proposed IMF's practical version, called the Diffusion Schrodinger Bridge Matching (DSBM) algorithm~\citep{dsbm}. The idea is to do Markovian projections with both forward and reverse-time SDEs in an alternating fashion where the former starts from $\pi_0$ and the latter from $\pi_1$, which was shown to mitigate the boundary condition issue.

\section{A Unified Framework for Diffusion Bridge Matching Problems
}\label{sec:unified}

Our proposed unified framework is described in Alg.~\ref{alg:sde_cfm}. It can be seen as an extension of the CFM algorithm~\citep{cfm} where the only difference is that we consider the SDE bridge instead of the ODE bridge (i.e., the diffusion term in step 2). 
But this difference is crucial, as will be shown, allowing us to resolve the limitations of the CFM discussed in Sec.~\ref{sec:fm_cfm}. It also makes the framework general enough to subsume the IMF/DSBM algorithm for the Schr\"{o}dinger bridge problem and various ODE bridge algorithms as special cases. 
We also emphasize that even though this small change of adding the diffusion term in step 2 may look minor, its theoretical consequence, specifically our theoretical result in Theorem~\ref{thm:sde_cfm_general}, has rarely been studied in the literature by far.

We call the unified framework {\em Unified Bridge Algorithm} (UBA for short). Note that UBA described in Alg.~\ref{alg:sde_cfm} can deal with both ODE and SDE bridge problems, and if the diffusion coefficient $\sigma$ vanishes, it reduces to CFM for ODE bridge. Similarly as CFM, under the assumption of rich enough neural network functional capacity and perfect optimization solutions, our framework {\em guarantees} to solve the bridge problem. More formally, we have the following theorem.
\begin{theorem}[Our Unified Bridge Algorithm (UBA) solves the bridge problem]
If the neural network $v_\theta(t,x)$ functional space is rich enough to approximate any function arbitrarily closely, and if the optimization in step 3 can be solved perfectly, then each iteration of going through steps 1--3 in Alg.~\ref{alg:sde_cfm} ensures that $dx_t = v_\theta(t,x_t) dt + \sigma dW_t, \ x_0\!\sim\!\pi_0(\cdot)$ (after the optimization in step 3) admits $\{P_t(x_t)\}_t$ of (\ref{eq:sde_cfm_pt}) as its marginal distributions.
\label{thm:sde_cfm_general}
\end{theorem}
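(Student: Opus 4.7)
The plan is to mimic the CFM marginalization argument of Sec.~\ref{sec:fm_cfm} in the SDE setting, using the Fokker-Planck equation in place of the continuity equation. I would break the proof into three steps: (i) observe that by design the conditional SDE $dx_t = u_t(x_t|x_0,x_1)dt + \sigma dW_t$ admits $\{P_t(\cdot|x_0,x_1)\}_t$ as its marginals for every fixed pair $(x_0,x_1)$; (ii) derive the ``marginalized'' drift
\[
u_t(x_t) \;:=\; \frac{1}{P_t(x_t)}\,\mathbb{E}_{Q(x_0,x_1)}\!\bigl[u_t(x_t|x_0,x_1)\,P_t(x_t|x_0,x_1)\bigr],
\]
and prove that the SDE $dx_t = u_t(x_t)dt + \sigma dW_t$ with $x_0\sim\pi_0$ admits exactly $\{P_t(x_t)\}_t$ from (\ref{eq:sde_cfm_pt}) as its marginals; (iii) invoke the standard conditional-expectation/regression identity to conclude that the minimizer of the MSE loss in step~3 satisfies $v_{\theta^*}(t,x_t) = u_t(x_t)$ under the rich-capacity / perfect-optimization hypothesis.

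Step (i) is essentially built into step~2 of Alg.~\ref{alg:sde_cfm}: the drift $u_t(x_t|x_0,x_1)$ is chosen precisely so that $P_t(x_t|x_0,x_1)$ solves the conditional Fokker-Planck equation
\[
\partial_t P_t(x|x_0,x_1) \;=\; -\nabla\!\cdot\!\bigl(u_t(\cdot|x_0,x_1)\,P_t(\cdot|x_0,x_1)\bigr) \;+\; \tfrac{\sigma^2}{2}\Delta P_t(\cdot|x_0,x_1).
\]
Step (ii) is the crux. Integrate the conditional Fokker-Planck equation against $Q(x_0,x_1)$ and interchange the spatial differential operators with the integral. Because $\sigma$ is constant in $(x_0,x_1)$, the diffusion term collapses directly to $\tfrac{\sigma^2}{2}\Delta P_t(x)$, while by the very definition of $u_t(x_t)$ the drift term becomes $\nabla\!\cdot\!\bigl(u_t(x_t)P_t(x_t)\bigr)$. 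Hence $P_t(x)$ satisfies the Fokker-Planck equation of the marginal SDE with initial condition $P_0=\pi_0$ (enforced by the boundary constraint on the pinned marginal path), and uniqueness of the Fokker-Planck solution identifies $P_t(x)$ as the true marginal of that SDE. Step (iii) then proceeds as in CFM: expanding the squared loss and conditioning on $x_t$ gives the Bayes-optimal regressor $\mathbb{E}[u_t(x_t|x_0,x_1)\mid x_t] = u_t(x_t)$, which coincides with the marginal drift of step (ii). Combining (ii) and (iii) yields the claim.

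The main obstacle I anticipate is the rigorous justification of the interchange of differentiation and integration in step (ii), i.e., checking that
\[
\int \nabla\!\cdot\!\bigl(u_t(\cdot|x_0,x_1)P_t(\cdot|x_0,x_1)\bigr)Q(x_0,x_1)\,d(x_0,x_1) \;=\; \nabla\!\cdot\!\bigl(u_t(x_t)P_t(x_t)\bigr),
\]
together with the analogous Laplacian identity. Under mild regularity (integrability and sufficient decay of $P_t(x|x_0,x_1)$ in $x$, together with local boundedness of $u_t$) this follows from dominated convergence, but stating the hypotheses cleanly is the only nontrivial bookkeeping. A secondary subtlety worth noting is existence/uniqueness of the marginal SDE solution with drift $u_t(x_t)$ (which may be merely measurable after marginalization); this is typically handled by interpreting the SDE weakly through its Fokker-Planck equation, exactly as in the CFM/DSBM literature.
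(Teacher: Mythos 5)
Your proposal is correct and follows essentially the same route as the paper's proof: its Lemma~\ref{lemma:1} is your step (ii) (marginalizing the conditional Fokker--Planck equation against $Q$ to identify $u_t(x)$ as the marginal drift), and its Lemma~\ref{lemma:2} is your step (iii) (the regression/conditional-expectation identity showing $v_{\theta^*}(t,x)=\mathbb{E}[u_t(x|x_0,x_1)\mid x_t=x]=u_t(x)$). The only difference is that the paper adds a third lemma on gradient equivalence for the practical (non-ideal) setting, and, like you anticipated, it does not dwell on the regularity conditions justifying the interchange of integration and differentiation.
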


The proof can be found in Appendix~\ref{appsec:proofs}. 
The theorem says that after each iteration of going through steps 1--3, it is always guaranteed that the current SDE admits $\{P_t(x_t)\}_t$ defined in step 1 as marginal distributions. Since $P_0(\cdot) = \pi_0(\cdot)$ and $P_1(\cdot) = \pi_1(\cdot)$, the bridge problem is solved. Depending on the design choice, one can have just one iteration to solve the bridge problem. Under certain choices, however, it might be necessary to run the iterations many times to find the desired bridge solutions (e.g., mini-batch OT-CFM~\citep{cfm} and the IMF/DSBM Schr\"{o}dinger bridge matching algorithm~\citep{dsbm} as we illustrate in Sec.~\ref{sec:special_case_mot_cfm} and Sec.~\ref{sec:special_case_sb}, respectively). 

In the subsequent sections, we illustrate how several popular ODE bridge and Schr\"{o}dinger bridge algorithms can be instantiated as spcial cases of our UBA framework.

\newcommand\inlineeqno{\stepcounter{equation}\ (\theequation)}
\newcommand{\INDSTATE}[1][1]{\STATE\hspace{#1\algorithmicindent}}
\begin{algorithm}[t!]
\caption{Our Unified Bridge Algorithm (UBA) for bridge problems.
}
\label{alg:sde_cfm}
\begin{small}
\begin{algorithmic}
\STATE \textbf{Input:} The end-point distributions $\pi_0$ and $\pi_1$ (i.e., samples from them).
\STATE \textbf{Repeat} until convergence or a sufficient number of times:
    \INDSTATE[1] 1. Choose a pinned marginal path $\{P_t(x|x_0,x_1)\}_t$ and a coupling  distribution $Q(x_0,x_1)$ such that 
    \INDSTATE[1] \ \ \ \ $P_0(\cdot)=\pi_0(\cdot)$ and $P_1(\cdot)=\pi_1(\cdot)$ where
    \begin{align}
    P_t(x_t) := \int P_t(x_t|x_0,x_1) Q(x_0,x_1) d(x_0,x_1)
    \label{eq:sde_cfm_pt}
    \end{align}
    \INDSTATE[1] 2. Choose $\sigma \geq 0$, and find $u_t(x|x_0,x_1)$ such that the SDE 
    \begin{align}
    dx_t = u_t(x_t|x_0,x_1) dt + \sigma dW_t
    \label{eq:sde_cfm_sde}
    \end{align}
    \ \ \ \ \ \ \ \ \ \ admits $\{P_t(x|x_0,x_1)\}_t$ as its marginals. (Note: many possible choices for $\sigma$ and $u_t(x|x_0,x_1)$) 
    %
    \INDSTATE[1] 3. Solve the following optimization problem with respect to the neural network $v_\theta(t,x)$:
    \begin{align}
    \min_\theta \ \mathbb{E}_{t, Q(x_0,x_1) P_t(x_t|x_0,x_1)}||u_t(x_t|x_0,x_1)-v_\theta(t,x_t)||^2
    \label{eq:sde_cfm_optim}
    \end{align}
\STATE \textbf{Return:} The learned SDE $dx_t = v_\theta(t,x_t) dt + \sigma dW_t$ as the bridge problem solution.
\end{algorithmic}
\end{small}
\end{algorithm}

\subsection{A Special Case: (Mini-batch) Optimal Transport CFM~\citep{cfm}}\label{sec:special_case_mot_cfm}

Within our general Unified Bridge Algorithm (UBA) framework (Alg.~\ref{alg:sde_cfm}), we select $P_t(x|x_0,x_1)$, $Q(x_0,x_1)$ and $u_t(x|x_0,x_1)$ as follows. First in step 1, 
\begin{align}
P_t(x_t|x_0,x_1) = \mathcal{N}(x_t; (1-t)x_0 + tx_1, \sigma_{min}^2 I)
\end{align}
\begin{align}
Q(x_0,x_1) = P^{mOT}(x_0,x_1) := 
\sum_{i\in B_0} \sum_{j\in B_1} \delta(x_0=x_0^i) \delta(x_1=x_1^j) p^{mOT}_{ij}
\end{align}
where $\sigma_{min}\to 0$, and  $(\{x^i_0\}_{i\in B_0}, \{x^j_1\}_{j\in B_1})$ is the mini-batch data, and $\{p^{mOT}_{ij}\}_{ij}$ is a $(|B_0| \times |B_1|)$ mini-batch OT solution matrix learned with the mini-batch as training data. That is,
\begin{align}
p^{mOT} = \arg\min_{p} \sum_{i,j} p_{ij} ||x^i_0-x^j_1||^2 \ \ \ \ \textrm{s.t.} \ \ \sum_{j\in B_1} p_{ij} = \frac{1}{|B_0|}, \ \sum_{i\in B_0} p_{ij} = \frac{1}{|B_1|}
\end{align}
It is worth mentioning that $\sigma_{min}\to 0$ is required to have boundary consistency for $P_t(x_t|x_0,x_1)$ at $t=0$ and $t=1$. 
Note also that $P_t(x|x_0,x_1)$ is always fixed over iterations while $Q(x_0,x_1)$ varies over iterations depending on the mini-batch data sampled. Note that in our UBA framework, each iteration allows for different choices of $P_t(x|x_0,x_1)$ and $Q(x_0,x_1)$. 

In step 2, we choose $\sigma=0$, and define $u_t(x_t|x_0,x_1)$ to be a constant (independent on $t$) straight line vector from $x_0$ to $x_1$, i.e., 
\begin{align}
u_t(x_t|x_0,x_1) = x_1 - x_0
\end{align}
which can be shown to make the ODE $dx_t = u_t(x_t|x_0,x_1)dt$ admit $P_t(x_t|x_0,x_1)$ as its marginal distributions~\citep{cfm}.

The above choices precisely yield the (mini-batch) optimal transport CFM (OT-CFM) introduced in~\citep{cfm}. 


\subsection{A Special Case: (Mini-batch) Schr\"{o}dinger Bridge CFM~\citep{cfm}}\label{sec:special_case_msb_cfm}

Within our general Unified Bridge Algorithm (UBA) framework (Alg.~\ref{alg:sde_cfm}), we select $P_t(x|x_0,x_1)$, $Q(x_0,x_1)$ and $u_t(x|x_0,x_1)$ as follows. First in step 1, 
\begin{align}
P_t(x_t|x_0,x_1) = P^{ref}_t(x_t|x_0,x_1) = \mathcal{N}(x_t; (1-t)x_0 + tx_1, \sigma_{ref}^2 t(1-t) I)
\end{align}
\begin{align}
Q(x_0,x_1) = P^{mEOT}(x_0,x_1) := \sum_{i\in B_0} \sum_{j\in B_1} \delta(x_0=x_0^i) \delta(x_1=x_1^j) p^{mEOT}_{ij}
\label{eq:minibatch_q}
\end{align}
where $(\{x^i_0\}_{i\in B_0}, \{x^j_1\}_{j\in B_1})$ is the mini-batch data, and $\{p^{mEOT}_{ij}\}_{ij}$ is a $(|B_0| \times |B_1|)$ is the mini-batch entropic OT solution matrix with the negative entropy regularizing coefficient $2\sigma_{ref}^2$ (e.g., from the Sinkhorn-Knopp algorithm) learned with the mini-batch as training data. 
Although the samples from the coupling distribution $Q(x_0,x_1)$ in (\ref{eq:minibatch_q}) over the iterations in Alg.~\ref{alg:sde_cfm} conform to the data distributions $\pi_0$ and $\pi_1$ marginally, the mini-batch entropic OT solution is generally substantially different from the population entropic OT solution (the optimal solution of the Schr\"{o}dinger Bridge).

In step 2, we choose $\sigma=0$, and define $u_t(x_t|x_0,x_1)$ to be:
\begin{align}
u_t(x_t|x_0,x_1) = \frac{1-2t}{2t(1-t)} (x_t - (t x_1 + (1-t) x_0)) + x_1 - x_0
\end{align}
which can be shown to make the ODE $dx_t = u_t(x_t|x_0,x_1)dt$ admit $P_t(x_t|x_0,x_1)$ as its marginal distributions~\citep{cfm}.

The above choices precisely yield the (mini-batch) Schr\"{o}dinger Bridge CFM (SB-CFM) introduced in~\citep{cfm}. Although the marginals of SB-CFM match those of the Schr\"{o}dinger bridge solution, the entire path measure not since it only finds an ODE bridge solution.

\subsection{A Special Case: IMF or Deep Schr\"{o}dinger Bridge Matching (DSBM)~\citep{dsbm}}\label{sec:special_case_sb}

As discussed in~\ref{sec:imf}, the IMF/DSBM algorithm is based on the IMF principle where starting from $P(\{x_t\}_{t\in[0,1]}) = \pi_0(x_0) \pi_1(x_1) P^{ref}(\{x_t\}_{t\in(0,1)}|x_0,x_1)$, repeatedly and alternatively applying the projections $P \leftarrow \Pi_{\mathcal{M}}(P)$ and $P \leftarrow \Pi_{\mathcal{R}}(P)$ leads to convergence to the Schr\"{o}dinger bridge solution.
How does this algorithm fit in the framework of our Unified Bridge Algorithm (UBA) in Alg.~\ref{alg:sde_cfm}? We will see that a specific choice of $P_t(x|x_0,x_1)$, $Q(x_0,x_1)$ (in step 1) and $u_t(x|x_0,x_1)$ (in step 2) precisely leads to the IMF algorithm. We describe the algorithm in Alg.~\ref{alg:sde_cfm_imf}. 

In step 1, the pinned path marginals $P_t(x|x_0,x_1)$ are set to be equal to $P^{ref}_t(x|x_0,x_1)$ which can be written analytically as Gaussian (\ref{eq:sde_cfm_imf_1_p}). 
The coupling $Q(x_0,x_1)$ 
is defined to be the coupling distribution $P^{v_{\theta}}(x_0,x_1)$ that is induced from the SDE in the previous iteration (step 3), $P^{v_\theta}: dx_t = v_\theta(t,x_t) dt + \sigma dW_t$. In the first iteration where no $\theta$ is available yet, we set $Q(x_0,x_1):= \pi_0(x_0) \pi_1(x_1)$. We need to check if the boundary condition for (\ref{eq:sde_cfm_pt}) is satisfied.  This will be done shortly in the following paragraph. 
In step 2, we fix $\sigma:=\sigma_{ref}$, and set $u_t(x_t|x_0,x_1) := \sigma_{ref}^2 \nabla_{x_t} \log P^{ref}(x_1|x_t) = \frac{x_1-x_t}{1-t}$. In step 3 we update $\theta$ by solving the optimization  (\ref{eq:sde_cfm_imf_3}), the same as (\ref{eq:sde_cfm_optim}), with the chosen $P_t(x|x_0,x_1)$, $Q(x_0,x_1)$ and $u_t(x_t|x_0,x_1)$. 

Now we see how this choice leads to the IMF algorithm precisely. First, due to Doob's h-transform~\citep{rogers_williams}, the SDE $dx_t = u_t(x_t|x_0,x_1) dt + \sigma dW_t$ with the choice (\ref{eq:sde_cfm_imf_2}) admits $\{P^{ref}_t(x|x_0,x_1)\}_t$ as its marginals for any $(x_0,x_1)$. 
Next, the step 3, if optimized perfectly and ideally with zero neural net approximation error, is equivalent to $\Pi_{\mathcal{M}}(\Pi_{\mathcal{R}}(P^{v_{\theta_{old}}}))$ where $\theta_{old}$ is the optimized $\theta$ in the previous iteration\footnote{
Initially when there is no previous $\theta_{old}$ available, the step 3 is equivalent to $\Pi_{\mathcal{M}}(P^{init})$ where $P^{init}=\pi_0(x_0)\pi_1(x_1)P^{ref}(\cdot|x_0,x_1)$ which is already in the reciprocal class $\mathcal{R}$.
}. This can be easily understood by looking at the Markovian projection $\Pi_{\mathcal{M}}(P)$ written in the optimization form (\ref{eq:imf_nnet_optim}): The expectation is taken with respect to $P(x_t,x_1)$ that matches $Q(x_0,x_1) P^{ref}_t(x_t|x_0,x_1)$ in (\ref{eq:sde_cfm_imf_3}), and which is exactly the reciprocal projection of $P^{v_{\theta_{old}}}$ since $Q(x_0,x_1) = P^{v_{\theta_{old}}}(x_0,x_1)$ by construction. 
Lastly, we can verify that the choice in step 1 ensures the boundary conditions $P_0(\cdot)=\pi_0(\cdot)$, $P_1(\cdot)=\pi_1(\cdot)$. 
This is because $Q(x_0,x_1)$ always satisfies $Q(x_0) = \pi_0(x_0)$ and $Q(x_1) = \pi_0(x_1)$: initially  $Q(x_0,x_1)=\pi_0(x_0)\pi_1(x_1)$ obviously, and later as $Q(x_0,x_1)=P^{v_{\theta}}(x_0,x_1)$
results from the Markovian projection (from step 3 in the previous iteration). We recall from Sec.~\ref{sec:imf} that the Markovian projection preserves the boundary conditions. 

\begin{algorithm}[t!]
\caption{IMF algorithm~\citep{dsbm} as a special instance of our UBA.
}
\label{alg:sde_cfm_imf}
\begin{small}
\begin{algorithmic}
\STATE \textbf{Input:} The end-point distributions $\pi_0$ and $\pi_1$ (i.e., samples from them).
\STATE \textbf{Repeat} until convergence or a sufficient number of times:
    \INDSTATE[1] 1. Choose a pinned marginal path $\{P_t(x|x_0,x_1)\}_t$ and a coupling  distribution $Q(x_0,x_1)$ as follows:
    \begin{align}
    &P_t(x_t|x_0,x_1) := P^{ref}_t(x_t|x_0,x_1) = \mathcal{N}(x_t; (1-t) x_0 + t x_1, \sigma_{ref}^2 t(1-t) I) \label{eq:sde_cfm_imf_1_p} \\ 
    &Q(x_0,x_1) := \begin{cases}
        \pi_0(x_0) \pi_1(x_1) & \text{initially (if $\theta$ is not available)} \\
        P^{v_{\theta}}(x_0,x_1) & \text{otherwise}
    \end{cases}
    \label{eq:sde_cfm_imf_1_q}
    \end{align}
    \INDSTATE[1] 2. Choose $\sigma:=\sigma_{ref}$, and set $u_t(x|x_0,x_1)$ as: 
    \begin{align}
    u_t(x_t|x_0,x_1) := \sigma_{ref}^2 \nabla_{x_t} \log P^{ref}(x_1|x_t) = \frac{x_1-x_t}{1-t}
    \label{eq:sde_cfm_imf_2}
    \end{align}
    %
    \INDSTATE[1] 3. Solve the following optimization problem with respect to the neural network $v_\theta(t,x)$:
    \begin{align}
    \min_\theta \ \mathbb{E}_{t, Q(x_0,x_1) P_t(x_t|x_0,x_1)}||u_t(x_t|x_0,x_1)-v_\theta(t,x_t)||^2
    \label{eq:sde_cfm_imf_3}
    \end{align}
\STATE \textbf{Return:} The learned SDE $dx_t = v_\theta(t,x_t) dt + \sigma dW_t$ as the bridge problem solution.
\end{algorithmic}
\end{small}
\end{algorithm}


\textbf{$\bullet$ IMF algorithm as a UBA in minimal kinetic energy forms.} 
We can reformulate the IMF algorithm within our UBA framework using the minimal kinetic form as described in Alg.~\ref{alg:sde_cfm_imf_min_kinetic}. 
In fact it can be shown that Alg.~\ref{alg:sde_cfm_imf} and Alg.~\ref{alg:sde_cfm_imf_min_kinetic} are indeed equivalent, as stated in Theorem~\ref{thm:min_kinetic} in Appendix~\ref{appsec:min_kinetic}. 
Our proof in Appendix~\ref{appsec:min_kinetic} relies on some results from the stochastic optimal control theory~\citep{raginsky19}. 
Then what is the benefit of having this stochastic optimal control formulation for the IMF algorithm? Compared to Alg~\ref{alg:sde_cfm_imf}, it has more flexibility allowing us to extend or re-purpose the bridge matching algorithm for different goals. For instance, the Generalized Schr\"{o}dinger Bridge Matching (GSBM)~\citep{gsbm} adopted a formulation similar to Alg.~\ref{alg:sde_cfm_imf_min_kinetic}, in which they introduced the stage cost function that is minimized together with the control norm term. The final solution SDE would {\em not} be the Schr\"{o}dinger bridge solution, but can be seen as a {\em generalized} solution that takes into account problem-specific stage costs. Hence the algorithmic framework in Alg.~\ref{alg:sde_cfm_imf_min_kinetic} is especially beneficial for developing new problem setups and novel bridge algorithms, encouraging researchers to explore further in this area for future research. 





\begin{algorithm}[t!]
\caption{IMF algorithm~\citep{dsbm} as a minimal kinetic energy form in our UBA.
}
\label{alg:sde_cfm_imf_min_kinetic}
\begin{small}
\begin{algorithmic}
\STATE \textbf{Input:} The end-point distributions $\pi_0$ and $\pi_1$ (i.e., samples from them).
\STATE \textbf{Repeat} until convergence or a sufficient number of times:
    \INDSTATE[1] 1. Choose $P_t(x|x_0,x_1) = \mathcal{N}(x; \mu_t, \gamma_t^2 I)$ where $(\mu_t, \gamma_t)$ are solutions to the 
    following optimization:
    \begin{align}
    &\arg\min_{\{\mu_t,\gamma_t\}_t} \int_0^1 \mathbb{E}_{P_t(x|x_0,x_1)} \bigg[ \frac{1}{2} || \alpha_t(x|x_0,x_1) ||^2 \bigg] dt \ \ \ \ \textrm{where} \label{eq:sde_cfm_imf_min_kinetic_1_p_1} \\
    & \ \ \ \ \alpha_t(x|x_0,x_1) = \frac{d\mu_t}{dt} + a_t(x-\mu_t), \ \ a_t = \frac{1}{\gamma_t} \bigg(\frac{d\gamma_t}{dt} - \frac{\sigma_{ref}^2}{2\gamma_t}\bigg)
    \label{eq:sde_cfm_imf_min_kinetic_1_p_2}
    \end{align}
    \ \ \ \ \ \ \ \ \ \ Choose a coupling  distribution $Q(x_0,x_1)$ as follows:
    \begin{align}
    Q(x_0,x_1) := \begin{cases}
        \pi_0(x_0) \pi_1(x_1) & \text{initially (if $\theta$ is not available)} \\
        P^{v_{\theta}}(x_0,x_1) & \text{otherwise}
    \end{cases}
    \label{eq:sde_cfm_imf_min_kinetic_1_q}
    \end{align}
    \INDSTATE[1] 2. Choose $\sigma:=\sigma_{ref}$, and set $u_t(x|x_0,x_1):=\alpha_t(x|x_0,x_1)$.
    \INDSTATE[1] 3. Solve the following optimization problem with respect to the neural network $v_\theta(t,x)$:
    \begin{align}
    \min_\theta \mathbb{E}_{t, Q(x_0,x_1) P_t(x_t|x_0,x_1)}||u_t(x_t|x_0,x_1)-v_\theta(t,x_t)||^2
    \label{eq:sde_cfm_imf_min_kinetic_3}
    \end{align}
\STATE \textbf{Return:} The learned SDE $dx_t = v_\theta(t,x_t) dt + \sigma dW_t$ as the bridge problem solution.
\end{algorithmic}
\end{small}
\end{algorithm}


\section{Conclusion}\label{sec:conclusion}

In this article we have proposed a novel unified framework for the bridge problem, dubbed {\em Unified Bridge Algorithm} (UBA). We have shown that the UBA framework is general and flexible enough to subsume many existing (conditional) flow matching algorithms and iterative Schr\"{o}dinger bridge matching algorithms. The correctness of the UBA framework, i.e., that the UBA guarantees to meet the bridge boundary conditions in each iteration, has been proven rigorously under the universal approximation assumption for neural networks. 
In particular, we have illustrated how the existing Flow Matching (FM) algorithm, the (mini-batch) optimal transport FM algorithm, the (mini-batch) Schr\"{o}dinger bridge FM algorithm, and the deep Schr\"{o}dinger bridge matching (DSBM) algorithm can be instantiated as special cases within our UBA framework. Furthermore, our UBA framework with minimal kinetic energy forms can endow even more flexibility to allow for extending or re-purposing bridge matching algorithm for different goals. We believe that this unified framework will be useful for viewing the bridge problems in a more general and flexible perspective, and in turn can help researchers and practitioners to develop new bridge algorithms in their fields.



{
\bibliographystyle{spbasic}      
\bibliography{main}   

\begin{thebibliography}{13}
\providecommand{\natexlab}[1]{#1}
\providecommand{\url}[1]{{#1}}
\providecommand{\urlprefix}{URL }
\expandafter\ifx\csname urlstyle\endcsname\relax
  \providecommand{\doi}[1]{DOI~\discretionary{}{}{}#1}\else
  \providecommand{\doi}{DOI~\discretionary{}{}{}\begingroup \urlstyle{rm}\Url}\fi
\providecommand{\eprint}[2][]{\url{#2}}

\bibitem[{Albergo and Vanden-Eijnden(2023)}]{stoch_interp}
Albergo MS, Vanden-Eijnden E (2023) {Building Normalizing Flows with Stochastic Interpolants}. {International Conference on Learning Representations}

\bibitem[{De~Bortoli et~al(2021)De~Bortoli, Thornton, Heng, and Doucet}]{bortoli21}
De~Bortoli V, Thornton J, Heng J, Doucet A (2021) {Diffusion Schr\"{o}dinger Bridge with Applications to Score-based Generative Modeling}. In Advances in Neural Information Processing Systems

\bibitem[{Fortet(1940)}]{fortet40}
Fortet R (1940) {R\'{e}solution d’un syst\`{e}me d’\'{e}quations de M. Schr\"{o}dinger}. Journal de Math\'{e}matiques Pures et Appliqu\'{e}s 1:83--105

\bibitem[{Kullback(1968)}]{kullback68}
Kullback S (1968) Probability densities with given marginals. The Annals of Mathematical Statistics 39(4):1236--1243

\bibitem[{Lipman et~al(2023)Lipman, Chen, Ben-Hamu, Nickel, and Le}]{fm}
Lipman Y, Chen RTQ, Ben-Hamu H, Nickel M, Le M (2023) {Flow matching for generative modeling}. {International Conference on Learning Representations}

\bibitem[{Liu et~al(2024)Liu, Lipman, Nickel, Karrer, Theodorou, and Chen}]{gsbm}
Liu GH, Lipman Y, Nickel M, Karrer B, Theodorou EA, Chen RTQ (2024) {Generalized Schr\"{o}dinger Bridge Matching}. {International Conference on Learning Representations}

\bibitem[{Liu et~al(2023)Liu, Gong, and Liu}]{rectified_fm}
Liu X, Gong C, Liu Q (2023) {Flow Straight and Fast: Learning to Generate and Transfer Data with Rectified Flow}. {International Conference on Learning Representations}

\bibitem[{Rogers and Williams(2000)}]{rogers_williams}
Rogers LCG, Williams D (2000) Diffusions, Markov processes, and Martingales. Vol. 2. Cambridge University Press, Cambridge

\bibitem[{R\"{u}schendorf(1995)}]{ruschendorf95}
R\"{u}schendorf L (1995) Convergence of the iterative proportional fitting procedure. The Annals of Statistics 23(4):1160--1174

\bibitem[{Shi et~al(2023)Shi, Bortoli, Campbell, and Doucet}]{dsbm}
Shi Y, Bortoli VD, Campbell A, Doucet A (2023) {Diffusion Schr\"{o}dinger Bridge Matching}. In Advances in Neural Information Processing Systems

\bibitem[{Tong et~al(2023)Tong, Fatras, Malkin, Huguet, Zhang, Rector-Brooks, Wolf, and Bengio}]{cfm}
Tong A, Fatras K, Malkin N, Huguet G, Zhang Y, Rector-Brooks J, Wolf G, Bengio Y (2023) {Improving and generalizing flow-based generative models with minibatch optimal transport}. In: arXiv preprint, \urlprefix\url{https://arxiv.org/abs/2302.00482}

\bibitem[{Tzen and Raginsky(2019)}]{raginsky19}
Tzen B, Raginsky M (2019) {Theoretical guarantees for sampling and inference in generative models with latent diffusions}. In Conference on Learning Theory

\bibitem[{Vargas et~al(2021)Vargas, Thodoroff, Lamacraft, and Lawrence}]{vargas21}
Vargas F, Thodoroff P, Lamacraft A, Lawrence N (2021) {Solving Schr\"{o}dinger Bridges via Maximum Likelihood}. Entropy 23(9):1134

\end{thebibliography}
}

\clearpage

\appendix

\centerline{\huge\textbf{{Appendix}}}


\section{Theorems and Proofs}\label{appsec:proofs}

\subsection{Proof of Theorem~\ref{thm:sde_cfm_general}.}

To prove Theorem~\ref{thm:sde_cfm_general}, we show the following three lemmas in turn: 
\begin{enumerate}
\item (Lemma~\ref{lemma:1}) We first show that after step 2 of Alg.~\ref{alg:sde_cfm} is done, the SDE $dx_t = u_t(x_t) dt + \sigma dW_t, \ x_0\!\sim\!\pi_0(\cdot)$ admits $\{P_t(x_t)\}_t$ as its marginal distributions where $u_t(x) = \frac{1}{P_t(x)}\mathbb{E}_Q[u_t(x|x_0,x_1) P_t(x|x_0,x_1)]$.
\item Under the assumptions made in the theorem, that is, i) the neural network $v_\theta(t,x)$'s functional space is rich enough to approximate any function arbitrarily closely; ii) the step 3 of Alg.~\ref{alg:sde_cfm} is solved perfectly, we will show that the solution to step 3 is $v_\theta(t,x) = \mathbb{E}[u_t(x|x_0,x_1)|x_t\!=\!x]$. This straightforwardly comes from the regression theorem, but we will elaborate it in greater detail in Lemma~\ref{lemma:2} below. We then show that this conditional expectation equals $u_t(x)$ defined in (\ref{eq:utx}), i.e., $v_\theta(t,x) = u_t(x)$. This will complete the proof, and we assert that $dx_t = v_{\theta^*}(t, x_t) dt + \sigma_{ref} dW_t, \ x_0\!\sim\!\pi_0(\cdot)$ admits $\{P_t(x_t)\}_t$ as its marginals.
\item Practically, the training dynamics of the gradient descent for step 3 of Alg.~\ref{alg:sde_cfm} can be shown to be identical to that of minimizing $\mathbb{E}||v_\theta(t,x)-u_t(x)||^2$. This is done in Lemma~\ref{lemma:3} although the proof is very similar to the result in~\cite{cfm}. Hence, in practice, even without the assumptions of the ideal rich neural network functional capacity and perfect optimization, we can continue to reduce the error between $v_\theta(t,x)$ and $u_t(x)$ in the course of gradient descent for step 3.
\end{enumerate}

\begin{lemma}
Suppose $\{P_t(x|x_0,x_1)\}_t$ be the marginal distributions of the SDE $dx_t = u_t(x_t|x_0,x_1)dt + \sigma dW_t$ for given $x_0$ and $x_1$. In other words, step 2 of Alg.~\ref{alg:sde_cfm} is done. For
\begin{align}
P_t(x) &:= \int P_t(x|x_0,x_1) Q(x_0,x_1) d(x_0,x_1) \label{eq:ptx} \\
u_t(x) &:= \frac{1}{P_t(x)}\mathbb{E}_{Q(x_0,x_1)}[u_t(x|x_0,x_1) P_t(x|x_0,x_1)], \label{eq:utx}
\end{align}
the SDE $dx_t=u_t(x_t)dt + \sigma dW_t$, $x_0\!\sim\!\pi_0(\cdot)$ has marginal distributions $\{P_t(x)\}_t$.
\label{lemma:1}
\end{lemma}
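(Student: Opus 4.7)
The plan is to use the Fokker--Planck (Kolmogorov forward) equation to compare the marginal flow induced by the conditional SDE with that induced by the marginal SDE. The key observation is that Fokker--Planck is linear in the density, so averaging it over the conditioning variables $(x_0,x_1)$ yields an equation for $P_t(x)$ that has the same form as the one generated by the (to-be-determined) marginal drift.

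First I would write the FPE associated with the conditional SDE $dx_t = u_t(x_t|x_0,x_1)\,dt + \sigma\,dW_t$, namely
\begin{equation*}
\partial_t P_t(x|x_0,x_1) = -\nabla_x \cdot \bigl[ u_t(x|x_0,x_1)\,P_t(x|x_0,x_1) \bigr] + \tfrac{\sigma^2}{2}\,\Delta_x P_t(x|x_0,x_1),
\end{equation*}
which holds by hypothesis since the $P_t(\cdot|x_0,x_1)$ are the marginals of that SDE. Then I would multiply both sides by $Q(x_0,x_1)$ and integrate over $(x_0,x_1)$. Since $\partial_t$, $\nabla_x$, and $\Delta_x$ commute with integration against a measure that does not depend on $t$ or $x$, the left-hand side becomes $\partial_t P_t(x)$ with $P_t(x)$ as in \eqref{eq:ptx}, and the Laplacian term becomes $\tfrac{\sigma^2}{2}\Delta_x P_t(x)$ by linearity.

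The crucial algebraic step is on the divergence term: by the definition \eqref{eq:utx}, the integral $\mathbb{E}_{Q}[u_t(x|x_0,x_1)P_t(x|x_0,x_1)]$ is exactly $u_t(x)\,P_t(x)$. Substituting this identity yields
\begin{equation*}
\partial_t P_t(x) = -\nabla_x \cdot \bigl[ u_t(x)\,P_t(x) \bigr] + \tfrac{\sigma^2}{2}\,\Delta_x P_t(x),
\end{equation*}
which is precisely the FPE of the candidate marginal SDE $dx_t = u_t(x_t)\,dt + \sigma\,dW_t$. For the initial condition, I would note that $P_0(x) = \int P_0(x|x_0,x_1)Q(x_0,x_1)\,d(x_0,x_1) = \pi_0(x)$ (this is the boundary condition already guaranteed in step~1 of Alg.~\ref{alg:sde_cfm}), which coincides with the distribution of $x_0$ in the SDE statement.

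I would then invoke uniqueness of solutions to the FPE under mild regularity (e.g.\ locally bounded drift, non-degenerate diffusion or standard integrability on $u$ and $P_t$) to conclude that the family $\{P_t(x)\}_t$ defined by \eqref{eq:ptx} must equal the marginal law of the SDE $dx_t = u_t(x_t)\,dt + \sigma\,dW_t$ started at $\pi_0$, which is the claim. The main technical subtlety I anticipate is not the algebra but justifying the interchange of $\nabla_x$ with the $(x_0,x_1)$-integral and invoking FPE uniqueness; in the $\sigma>0$ case this is standard (parabolic), while in the $\sigma=0$ continuity-equation case it requires a mild Lipschitz/linear-growth assumption on $u_t(x)$, which I would state as a regularity caveat rather than prove from first principles.
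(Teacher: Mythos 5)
Your proposal is correct and follows essentially the same route as the paper's proof: apply the Fokker--Planck equation to the conditional SDE, integrate against $Q(x_0,x_1)$, and use the definition of $u_t(x)$ to rewrite the averaged divergence term as $-\Div\{P_t(x)u_t(x)\}$. Your additional remarks on the initial condition and on FPE uniqueness/regularity are sensible points that the paper leaves implicit, but they do not change the argument.
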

\begin{proof}
For the given $x_0$ and $x_1$, we apply the Fokker-Planck equation to the SDE $dx_t = u_t(x_t|x_0,x_1)dt + \sigma dW_t$ with its marginals $\{P_t(x|x_0,x_1)\}_t$.
\begin{align}
\frac{\partial}{\partial t} P_t(x|x_0,x_1) = -\Div \{P_t(x|x_0,x_1) u_t(x|x_0,x_1)\} + \frac{\sigma^2}{2} \Delta P_t(x|x_0,x_1)
\label{eq:fokker_planck}
\end{align}
where $\Div$ is the divergence operator and $\Delta$ is the Laplace operator. 
Now we derive the Fokker-Planck equation for the target SDE as follows:
\begin{align}
&\frac{\partial}{\partial t} P_t(x) = \frac{\partial}{\partial t} \int P_t(x|x_0,x_1) Q(x_0,x_1) d(x_0,x_1) \label{eq:fp_deriv_1} \\ 
&\ \ \ \ \ \ = \int \frac{\partial}{\partial t} P_t(x|x_0,x_1) Q(x_0,x_1) d(x_0,x_1) \label{eq:fp_deriv_2} \\ 
&\ \ \ \ \ \ = \int \Big( -\Div \{P_t(x|x_0,x_1) u_t(x|x_0,x_1)\} + \frac{\sigma^2}{2} \Delta P_t(x|x_0,x_1) \Big) Q(x_0,x_1) d(x_0,x_1) \label{eq:fp_deriv_3} \\
&\ \ \ \ \ \ = -\Div \mathbb{E}_{Q}\big[u_t(x|x_0,x_1) P_t(x|x_0,x_1)\big] + \frac{\sigma^2}{2} \Delta \int P_t(x|x_0,x_1) Q(x_0,x_1) d(x_0,x_1) \label{eq:fp_deriv_4} \\
&\ \ \ \ \ \ = -\Div \bigg\{ P_t(x) \frac{1}{P_t(x) }\mathbb{E}_{Q}\big[u_t(x|x_0,x_1) P_t(x|x_0,x_1)\big] \bigg\} + \frac{\sigma^2}{2} \Delta P_t(x) \label{eq:fp_deriv_5} \\
&\ \ \ \ \ \ = -\Div \{P_t(x) u_t(x)\} + \frac{\sigma^2}{2} \Delta P_t(x) \label{eq:fp_deriv_5}
\end{align}
This establishes the Fokker-Planck equation for the SDE $dx_t=u_t(x_t)dt + \sigma dW_t$, $x_0\!\sim\!\pi_0(\cdot)$, to which $\{P_t(x)\}_t$ is the solution. This completes the proof of Lemma~\ref{lemma:1}.  
\end{proof}

\begin{lemma}
Under the assumptions of ideal rich neural network capacity and perfect optimization made in the theorem, the solution to step 3 of Alg.~\ref{alg:sde_cfm}, that is, 
\begin{align}
\theta^* = \arg\min_\theta \ \mathbb{E}_{t, Q(x_0,x_1) P_t(x_t|x_0,x_1)}||u_t(x_t|x_0,x_1)-v_\theta(t,x_t)||^2
\label{eq:sde_cfm_optim_re}
\end{align}
satisfies $v_{\theta^*}(t,x) = \mathbb{E}[u_t(x|x_0,x_1)|x_t\!=\!x] = u_t(x)$ where $u_t(x)$ is defined in (\ref{eq:utx}). 
\label{lemma:2}
\end{lemma}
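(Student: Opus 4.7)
The plan is to recognize the step-3 objective as a standard vector-valued $L^2$ regression problem, identify its unconstrained minimizer with a conditional expectation via the regression theorem, and then evaluate that conditional expectation explicitly using the joint law implicit in the training distribution. The standard regression result states that for any square-integrable random variable $Y$ and measurable $X$, the minimizer of $\mathbb{E}\|Y - f(X)\|^2$ over measurable $f$ is $f^\star(X) = \mathbb{E}[Y \mid X]$, attained uniquely up to a null set. Under the two assumptions of the theorem—universal approximation capacity of $v_\theta(t,x)$ and exact optimization—the minimizer of (\ref{eq:sde_cfm_optim_re}) coincides with this conditional expectation.

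First I would set up the regression: take $Y = u_t(x_t \mid x_0, x_1)$ and $X = (t, x_t)$, with underlying measure $t \sim \mathrm{Unif}[0,1]$, $(x_0, x_1) \sim Q$, and $x_t \mid (x_0, x_1) \sim P_t(\cdot \mid x_0, x_1)$. Applying the regression theorem under the stated assumptions yields
\begin{align}
v_{\theta^*}(t, x) \;=\; \mathbb{E}\bigl[\, u_t(x_t \mid x_0, x_1) \,\big|\, t,\; x_t = x \,\bigr]
\end{align}
for almost every $(t, x)$ in the support of the training distribution.

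Next I would compute this conditional expectation explicitly. For each fixed $t$, the joint density of $(x_0, x_1, x_t)$ factorizes as $Q(x_0, x_1)\, P_t(x_t \mid x_0, x_1)$, so marginalizing over $(x_0, x_1)$ recovers exactly the definition of $P_t(x)$ in (\ref{eq:ptx}). Bayes' rule then gives
\begin{align}
p(x_0, x_1 \mid x_t = x, t) \;=\; \frac{Q(x_0, x_1)\, P_t(x \mid x_0, x_1)}{P_t(x)}
\end{align}
wherever $P_t(x) > 0$. Integrating $u_t(x \mid x_0, x_1)$ against this posterior produces
\begin{align}
\mathbb{E}\bigl[\,u_t(x_t \mid x_0, x_1)\,\big|\,t, x_t = x\,\bigr] \;=\; \frac{1}{P_t(x)} \, \mathbb{E}_{Q(x_0,x_1)}\!\bigl[\,u_t(x \mid x_0, x_1)\, P_t(x \mid x_0, x_1)\,\bigr],
\end{align}
which is precisely $u_t(x)$ as defined in (\ref{eq:utx}). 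Chaining the two equalities gives $v_{\theta^*}(t, x) = u_t(x)$.

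The main obstacle is not a genuine difficulty but a bookkeeping subtlety: the identification of $v_{\theta^*}$ with the conditional expectation holds only almost everywhere with respect to the measure $\mathrm{Unif}[0,1] \otimes P_t(x_t)\,dt\,dx_t$, and strictly speaking $u_t(x)$ is only defined where $P_t(x) > 0$. However, this is exactly the set on which Lemma~\ref{lemma:1} requires the drift to be specified, so off of it the value of $v_{\theta^*}$ is irrelevant for determining the Fokker--Planck evolution. Hence the identification is sufficient to conclude that the SDE $dx_t = v_{\theta^*}(t, x_t)\,dt + \sigma\, dW_t$ admits $\{P_t(x)\}_t$ as its marginals, completing the chain of lemmas needed for Theorem~\ref{thm:sde_cfm_general}.
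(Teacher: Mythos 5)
Your proposal is correct and follows essentially the same route as the paper's proof: both invoke the standard $L^2$ regression theorem to identify $v_{\theta^*}(t,x)$ with the conditional expectation $\mathbb{E}[u_t(x_t|x_0,x_1)\,|\,x_t=x]$, and both evaluate that conditional expectation via Bayes' rule on the joint $Q(x_0,x_1)P_t(x_t|x_0,x_1)$ to recover $u_t(x)$ as in (\ref{eq:utx}); the only difference is the order of the two steps, plus your added (and reasonable) remark about the almost-everywhere nature of the identification.
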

\begin{proof}
We prove the second equality first. The expectation $\mathbb{E}[u_t(x_t|x_0,x_1)|x_t]$ is taken with respect to the distribution $R(x_0,x_1|x_t)$ defined to be proportional to $P_t(x_t|x_0,x_1) Q(x_0,x_1)$. 
\begin{align}
&\mathbb{E}[u_t(x_t|x_0,x_1)|x_t] = \int R(x_0,x_1|x_t) u_t(x_t|x_0,x_1) d(x_0,x_1) \label{condexp_1} \\
& \ \ \ \ \ \ = \int \frac{P_t(x_t|x_0,x_1) Q(x_0,x_1)}{\int P_t(x_t|x_0,x_1) Q(x_0,x_1) d(x_0,x_1)} u_t(x_t|x_0,x_1) d(x_0,x_1) \label{condexp_2} \\
& \ \ \ \ \ \ = \int \frac{P_t(x_t|x_0,x_1) Q(x_0,x_1)}{P_t(x_t)} u_t(x_t|x_0,x_1) d(x_0,x_1) \label{condexp_3} \\
& \ \ \ \ \ \ = \frac{1}{P_t(x_t)} \int P_t(x_t|x_0,x_1) Q(x_0,x_1) u_t(x_t|x_0,x_1) d(x_0,x_1) \label{condexp_4} \\
& \ \ \ \ \ \ = \frac{1}{P_t(x_t)} \mathbb{E}_{Q(x_0,x_1)}\big[u_t(x_t|x_0,x_1) P_t(x_t|x_0,x_1)\big] \label{condexp_5} \\
& \ \ \ \ \ \ = u_t(x_t)
\end{align}
We now prove the first equality. Although this straightforwardly comes from the regression theorem, but here we will elaborate it in greater detail. 
Due to the assumptions, the optimization (\ref{eq:sde_cfm_optim_re}) can be written in a functional form as:
\begin{align}
v^* = \arg\min_{v(\cdot,\cdot)} \ \mathbb{E}_{t, Q(x_0,x_1) P_t(x_t|x_0,x_1)}||u_t(x_t|x_0,x_1)-v(t,x_t)||^2
\label{eq:sde_cfm_optim_func}
\end{align}
where its optimizer $v^*(t,x)$ equals the optimizer $v_{\theta^*}(t,x)$ of (\ref{eq:sde_cfm_optim_re}). 
In the functional optimization (\ref{eq:sde_cfm_optim_func}), the objective is completely decomposed over $t$, and we can equivalently  minimize $\mathbb{E}_{Q(x_0,x_1) P_t(x_t|x_0,x_1)}||u_t(x_t|x_0,x_1)-v(t,x_t)||^2$ for each $t$. We take the functional gradient with respect to $v(t,\cdot)$. For ease of exposition, we will use simpler notation where we minimize $\mathbb{E}_{p(y,z)}||f(y,z)-g(y)||^2$ with respect to the function $g(\cdot)$. Hence there is direct correspondence: $y$ is to $x_t$, $z$ to $(x_0,x_1)$, $f(y,z)$ to $u_t(x_t|x_0,x_1)$, and $g$ to $v$. We see that at optimum, 
\begin{align}
\partial g(y) = \int 2 p(y,z) (g(y)-f(y,z)) dz = 0
\end{align}
leading to $g^*(y) = \mathbb{E}[f(y,z)|y]$. 
This regression theorem implies $v^*(t,x_t) = \mathbb{E}[u_t(x_t|x_0,x_1)|x_t]$.
This completes the proof Leamma~\ref{lemma:2}.
\end{proof}

\begin{lemma}
$\nabla_\theta \mathbb{E}_{P_t(x)}||v_\theta(t,x)-u_t(x)||^2 = \nabla_\theta \mathbb{E}_{P_t(x|x_0,x_1)Q(x_0,x_1)}||v_\theta(t,x)-u_t(x|x_0,x_1)||^2$ for each $t$, where $P_t(x)$ and $u_t(x)$ are defined as (\ref{eq:ptx}) and (\ref{eq:utx}), respectively.
\label{lemma:3}
\end{lemma}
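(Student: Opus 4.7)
The plan is to prove the two losses differ only by a $\theta$-independent additive term, from which the gradient equality is immediate. I would expand both squared norms into three pieces: the quadratic $\|v_\theta(t,x)\|^2$, the cross term $-2\langle v_\theta(t,x),\,\cdot\,\rangle$, and the remaining quadratic that does not involve $\theta$. The last piece contributes $\mathbb{E}_{P_t(x)}\|u_t(x)\|^2$ on the left-hand side and $\mathbb{E}_{P_t(x|x_0,x_1)Q(x_0,x_1)}\|u_t(x|x_0,x_1)\|^2$ on the right-hand side; both are constants with respect to $\theta$ and thus vanish after applying $\nabla_\theta$.

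Next I would show that the quadratic-in-$v_\theta$ terms agree. Since, by the definition of the marginal in (\ref{eq:ptx}), the $x$-marginal of the joint distribution $P_t(x|x_0,x_1) Q(x_0,x_1)$ is exactly $P_t(x)$, we have
\begin{equation*}
\mathbb{E}_{P_t(x|x_0,x_1) Q(x_0,x_1)} \|v_\theta(t,x)\|^2 \;=\; \mathbb{E}_{P_t(x)} \|v_\theta(t,x)\|^2,
\end{equation*}
so these pieces are identical even before differentiating in $\theta$.

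The only step requiring genuine work is the cross term. On the left-hand side one has $-2\,\mathbb{E}_{P_t(x)} \langle v_\theta(t,x), u_t(x)\rangle$. Substituting the definition (\ref{eq:utx}) of $u_t(x)$ cancels the $1/P_t(x)$ factor against the outer density, yielding
\begin{equation*}
-2 \int v_\theta(t,x)^\top \, \mathbb{E}_{Q(x_0,x_1)}\!\left[u_t(x|x_0,x_1)\, P_t(x|x_0,x_1)\right] dx,
\end{equation*}
which by Fubini is exactly $-2\,\mathbb{E}_{P_t(x|x_0,x_1) Q(x_0,x_1)} \langle v_\theta(t,x), u_t(x|x_0,x_1)\rangle$, matching the cross term on the right-hand side. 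Combining the three pieces gives that the two objectives differ by a $\theta$-independent quantity, and the gradient identity follows by linearity of $\nabla_\theta$.

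The only mild obstacle is technical: one needs enough regularity to interchange $\nabla_\theta$ with the integrals (dominated convergence on $\|v_\theta\|^2$ and on the inner product) and to apply Fubini on the cross term. These are standard under the implicit smoothness assumptions already used throughout the paper for $v_\theta$, $u_t(\cdot|x_0,x_1)$, and the densities involved, so I would simply state them as such rather than pursue a measure-theoretic digression.
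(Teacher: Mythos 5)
Your proposal is correct and follows essentially the same route as the paper's proof of Lemma~\ref{lemma:3}: the same three-way expansion of the squared norm, the same marginal identity for the $\|v_\theta\|^2$ term, and the same cancellation of $1/P_t(x)$ followed by Fubini for the cross term. The only cosmetic difference is that you phrase the conclusion as the two objectives differing by a $\theta$-independent constant, while the paper applies $\nabla_\theta$ term by term; these are equivalent.
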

\begin{proof}
\begin{align}
\nabla_\theta \mathbb{E}_{P_t(x)}||v_\theta(t,x)-u_t(x)||^2 &= \nabla_\theta \mathbb{E}_{P_t(x)}\Big[ ||v_\theta(t,x)||^2 - 2v_\theta(t,x)^\top u_t(x) \Big] \label{eq:grad_objs_1} \\ 
&= \nabla_\theta \mathbb{E}_{P_t(x)}||v_\theta(t,x)||^2 - 2 \nabla_\theta \mathbb{E}_{P_t(x)} \Big[ v_\theta(t,x)^\top u_t(x) \Big] \label{eq:grad_objs_2}
\end{align}
The first term can be written as:
\begin{align}
\nabla_\theta \mathbb{E}_{P_t(x)}||v_\theta(t,x)||^2 &= \nabla_\theta \int ||v_\theta(t,x)||^2 P_t(x) dx \label{eq:grad_objs_3} \\
&= \nabla_\theta \int ||v_\theta(t,x)||^2 P_t(x|x_0,x_1) Q(x_0,x_1) d(x,x_0,x_1) \label{eq:grad_objs_4} \\
&= \nabla_\theta \mathbb{E}_{P_t(x|x_0,x_1)Q(x_0,x_1)}||v_\theta(t,x)||^2 \label{eq:grad_objs_5}
\end{align}
The second term can be derived as:
\begin{align}
&\nabla_\theta \mathbb{E}_{P_t(x)} \Big[ v_\theta(t,x)^\top u_t(x) \Big] = \nabla_\theta \int v_\theta(t,x)^\top u_t(x) P_t(x) dx \label{eq:grad_objs_6} \\
&\ \ \ \ \ \ = \nabla_\theta \int v_\theta(t,x)^\top \mathbb{E}_{Q(x_0,x_1)}[u_t(x|x_0,x_1) P_t(x|x_0,x_1)] dx \label{eq:grad_objs_7} \\
&\ \ \ \ \ \ = \nabla_\theta \int v_\theta(t,x)^\top u_t(x|x_0,x_1) P_t(x|x_0,x_1) Q(x_0,x_1) d(x,x_0,x_1) \label{eq:grad_objs_8} \\
&\ \ \ \ \ \ = \nabla_\theta \mathbb{E}_{P_t(x|x_0,x_1)Q(x_0,x_1)}\Big[ v_\theta(t,x)^\top u_t(x|x_0,x_1) \Big] \label{eq:grad_objs_8}
\end{align}
Combining (\ref{eq:grad_objs_5}) and (\ref{eq:grad_objs_8}), and noting that $||u_t(x|x_0,x_1)||^2$ is independent of $\theta$, we complete the proof of Lemma~\ref{lemma:3}. 
\end{proof}

\subsection{Equivalence between Alg.~\ref{alg:sde_cfm_imf} and Alg.~\ref{alg:sde_cfm_imf_min_kinetic} }\label{appsec:min_kinetic}

\begin{theorem}
Under the same assumptions as those in Theorem~\ref{thm:sde_cfm_general}, Alg.~\ref{alg:sde_cfm_imf} and Alg.~\ref{alg:sde_cfm_imf_min_kinetic} lead to the same SDE solution, which is the Schr\"{o}dinger bridge matching. 
\label{thm:min_kinetic}
\end{theorem}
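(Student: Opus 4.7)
The plan is to show that the pinned marginal $P_t(x|x_0,x_1)$ and the conditional drift $u_t(x|x_0,x_1)$ chosen in Alg.~\ref{alg:sde_cfm_imf_min_kinetic} are identical to those used in Alg.~\ref{alg:sde_cfm_imf}. Since the coupling update (\ref{eq:sde_cfm_imf_min_kinetic_1_q}) mirrors (\ref{eq:sde_cfm_imf_1_q}) and the step~3 regression (\ref{eq:sde_cfm_imf_min_kinetic_3}) mirrors (\ref{eq:sde_cfm_imf_3}), once step~1 and step~2 agree, the two algorithms coincide iteration by iteration, and by Theorem~\ref{thm:sde_cfm_general} they return the same SDE, which (as already shown in Sec.~\ref{sec:special_case_sb}) is the Schr\"{o}dinger bridge matching.

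First I would verify the drift formula implicit in Alg.~\ref{alg:sde_cfm_imf_min_kinetic}: for any Gaussian ansatz $P_t(x|x_0,x_1) = \mathcal{N}(\mu_t, \gamma_t^2 I)$, the unique drift of a diffusion-$\sigma_{ref}$ SDE that admits these as marginals has the affine form $\alpha_t = \dot\mu_t + a_t(x - \mu_t)$ with $a_t$ as in (\ref{eq:sde_cfm_imf_min_kinetic_1_p_2}). This is a direct Fokker--Planck calculation: writing the Gaussian density, computing $\partial_t P_t$, and matching the coefficients of $(x-\mu_t)^0$ and $(x-\mu_t)^1$ on both sides of the Fokker--Planck equation yields exactly $a_t = (\dot\gamma_t - \sigma_{ref}^2/(2\gamma_t))/\gamma_t$ together with the mean equation $\dot\mu_t = \mathbb{E}[\alpha_t]$.

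Second, I would substitute $x = \mu_t + \gamma_t\epsilon$ with $\epsilon \sim \mathcal{N}(0,I)$ into the kinetic objective (\ref{eq:sde_cfm_imf_min_kinetic_1_p_1}) and take the expectation, which decouples the $\mu$ and $\gamma$ parts:
\begin{align*}
\mathbb{E}_{P_t}\!\left[\tfrac{1}{2}\|\alpha_t\|^2\right] \;=\; \tfrac{1}{2}\|\dot\mu_t\|^2 \;+\; \tfrac{d}{2}\left(\dot\gamma_t - \tfrac{\sigma_{ref}^2}{2\gamma_t}\right)^{\!2}.
\end{align*}
With the implicit pinning boundary conditions $\mu_0 = x_0,\ \mu_1 = x_1,\ \gamma_0 = \gamma_1 = 0$, minimizing the first term yields the straight line $\mu_t = (1-t)x_0 + tx_1$ by the standard geodesic argument. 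For the second term the Euler--Lagrange equation reduces to $\ddot\gamma_t = -\sigma_{ref}^4/(4\gamma_t^3)$, which $\gamma_t = \sigma_{ref}\sqrt{t(1-t)}$ satisfies by direct substitution; that this is the relevant stationary path with the given boundary conditions is precisely what the stochastic-optimal-control identification with the Brownian bridge, cited via \citep{raginsky19}, provides.

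Finally, with these minimizers $P_t(x|x_0,x_1) = \mathcal{N}((1-t)x_0 + tx_1,\ \sigma_{ref}^2 t(1-t) I)$ coincides with $P^{ref}_t$ in (\ref{eq:sde_cfm_imf_1_p}), and substituting $\dot\mu_t = x_1 - x_0$ and $a_t = -1/(1-t)$ into $\alpha_t$ gives
\begin{align*}
\alpha_t(x|x_0,x_1) \;=\; (x_1 - x_0) - \tfrac{x - (1-t)x_0 - tx_1}{1-t} \;=\; \tfrac{x_1 - x}{1-t},
\end{align*}
matching (\ref{eq:sde_cfm_imf_2}). So step~1 and step~2 of the two algorithms produce identical ingredients in every iteration, which completes the argument. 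The main obstacle I foresee is the second step: the variance integrand $(\dot\gamma_t - \sigma_{ref}^2/(2\gamma_t))^2$ is singular as $\gamma_t \to 0$ at the endpoints, so a purely variational argument must either be supplemented with a careful admissibility/renormalization discussion or, more cleanly, replaced by the Doob $h$-transform and stochastic optimal control viewpoint, which directly identifies the minimizing law with the Brownian bridge and reads off its Gaussian marginals without evaluating the formally divergent kinetic integral.
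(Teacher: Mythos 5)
Your computations check out and the overall argument is sound, but your core step takes a genuinely different route from the paper's. Both proofs begin identically, with a Fokker--Planck verification that the affine drift $\alpha_t = \dot\mu_t + a_t(x-\mu_t)$ with $a_t = \bigl(\dot\gamma_t - \sigma_{ref}^2/(2\gamma_t)\bigr)/\gamma_t$ transports the Gaussian marginals $\mathcal{N}(\mu_t,\gamma_t^2 I)$ (a small slip: for a Gaussian density one matches the degree-one \emph{and} degree-two coefficients in $(x-\mu_t)$, not degree zero and one). From there you stay inside the Gaussian ansatz and minimize directly: the decoupled objective $\tfrac12\|\dot\mu_t\|^2 + \tfrac{d}{2}\bigl(\dot\gamma_t - \sigma_{ref}^2/(2\gamma_t)\bigr)^2$, the geodesic argument for $\mu_t$, and the Euler--Lagrange equation $\ddot\gamma_t = -\sigma_{ref}^4/(4\gamma_t^3)$ for $\gamma_t$ are all correct, and your substitution recovering $\alpha_t = (x_1-x)/(1-t)$ matches (\ref{eq:sde_cfm_imf_2}). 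The paper instead \emph{relaxes} the Gaussian-constrained problem (\ref{eq:constrained_min_kinetic}) to an unconstrained stochastic optimal control problem with Dirac terminal cost, solves it via the Feynman--Kac/HJB argument of Lemma~\ref{lemma:raginsky} to get $u_t = \sigma_{ref}^2\nabla_x\log P^{ref}(x_1|x_t)$, and then invokes Doob's $h$-transform to conclude the optimal marginals are Gaussian after all. Your approach is more elementary and addresses the optimization in Alg.~\ref{alg:sde_cfm_imf_min_kinetic} exactly as written (the search is explicitly over $(\mu_t,\gamma_t)$); the paper's approach buys the additional fact that the Gaussian restriction is without loss of generality, and it supplies the verification-theorem machinery that upgrades your stationarity claim to genuine optimality. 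You correctly identify the one real subtlety, namely that the kinetic integral diverges logarithmically at the pinned endpoints (indeed $\dot\gamma_t - \sigma_{ref}^2/(2\gamma_t) = -\sigma_{ref}\sqrt{t/(1-t)}$, whose square is not integrable near $t=1$), and your proposed fix --- passing to the Doob $h$-transform / optimal control viewpoint --- is precisely the route the paper takes, though note the paper's Dirac terminal cost is equally formal on this point, so neither argument fully resolves the renormalization issue.
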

\begin{proof}
The proof goes as follows: i) We first show that the optimization problem (\ref{eq:sde_cfm_imf_min_kinetic_1_p_1}--\ref{eq:sde_cfm_imf_min_kinetic_1_p_2}) in step 1 in Alg.~\ref{alg:sde_cfm_imf_min_kinetic} can be seen as a constrained minimal kinetic energy optimal control problem with the constraint of Gaussian marginals $\{P_t(x|x_0,x_1)\}_t$; ii) We then relax it to an unconstrained version, and view the unconstrained problem as an instance of stochastic optimal control problem with fixed initial; iii) The latter is then shown to admit Gaussian pinned marginal solutions following the theory developed in~\citep{raginsky19}, thus proving that Gaussian constraining does not essentially restrict the problem. It turns out that the optimal pinned marginals and the optimal control have exactly the linear interpolation forms in (\ref{eq:sde_cfm_imf_1_p}) and (\ref{eq:sde_cfm_imf_2}), respectively, which completes the proof. 

In step 2, since we set $u_t(x|x_0,x_1) = \alpha_t(x|x_0,x_1)$ where $\alpha$ is the solution to (\ref{eq:sde_cfm_imf_min_kinetic_1_p_1}--\ref{eq:sde_cfm_imf_min_kinetic_1_p_2}), we will use the notation $u$ in place of $\alpha$ throughout the proof. First, we show that the SDE $dx_t = u_t(x_t|x_0,x_1) dt + \sigma dW_t$ with initial state $x_0$ at $t\!=\!0$ and $u$ satisfying (\ref{eq:sde_cfm_imf_min_kinetic_1_p_2}), admits $\{P_t(x_t|x_0,x_1)=\mathcal{N}(x_t; \mu_t,\gamma_t^2 I)\}_t$ as marginal distributions. Note that we must have $\mu_0=x_0$, $\mu_1=x_1$, $\gamma_0\to 0$, and $\gamma_1\to 0$ due to the conditioning (pinned process). This fact is in fact an extension of the similar one for ODE cases in~\citep{cfm}. We will do the proof here for SDE cases. We will establish the Fokker-Planck equation for the SDE, and we derive: 
\begin{align}
\frac{\partial P_t(x|x_0,x_1)}{\partial t} &= 
P_t(x|x_0,x_1) \cdot \frac{\partial \log \mathcal{N}(x; \mu_t, \gamma_t^2 I)}{\partial t} \\
&= P_t(x|x_0,x_1) \cdot \Bigg( - \frac{\gamma_t'}{\gamma_t}d + \frac{(x-\mu_t)^\top \mu_t'}{\gamma_t^2} + \frac{||x-\mu_t||^2\gamma_t'}{\gamma_t^3} \Bigg)
\label{eq:p9_lhs}
\end{align}
where $\mu_t'$ and $\gamma_t'$ are the time derivatives. 
We also derive the divergence and Laplacian as follows:
\begin{align}
&\Div\{ P_t(x|x_0,x_1) u_t(x|x_0,x_1) \} = \nonumber \\ & \ \ \ \ \ \ \ \ \ \ \ \ -P_t(x|x_0,x_1) \cdot \Bigg( -\frac{\gamma_t'}{\gamma_t}d + \frac{(x-\mu_t)^\top \mu_t'}{\gamma_t^2} + \frac{||x-\mu_t||^2 \Big( \gamma_t' - \frac{\sigma^2}{2\gamma_t} \Big)}{\gamma_t^3} + \frac{\sigma^2}{2\gamma_t^2} d \Bigg) 
\label{eq:p9_rhs1} \\
&\Delta P_t(x|x_0,x_1) = \textrm{Tr} \Big( \nabla_x^2 P_t(x|x_0,x_1) \Big) = P_t(x|x_0,x_1) \cdot \Bigg( \frac{||x-\mu_t||^2}{\gamma_t^4} - \frac{d}{\gamma_t^2} \Bigg) \label{eq:p9_rhs2}
\end{align}
From (\ref{eq:p9_lhs}), (\ref{eq:p9_rhs1}) and (\ref{eq:p9_rhs2}), we can establish the following equality, and it proves the fact. 
\begin{align}
\frac{\partial P_t(x|x_0,x_1)}{\partial t} = -\Div\{ P_t(x|x_0,x_1) u_t(x|x_0,x_1) \} + \frac{\sigma^2}{2} \Delta P_t(x|x_0,x_1)
\end{align}

From the above fact, we can re-state the step 1 of Alg.~\ref{alg:sde_cfm_imf_min_kinetic} as follows:

{\em (Step 1 re-stated) Choose $P_t(x|x_0,x_1)$ as the marginals of the SDE, $dx_t = u_t(x_t|x_0,x_1) dt + \sigma dW_t$ with initial state $x_0$ at $t\!=\!0$ where $u$ is the solution to the constrained optimization:}
\begin{align}
\min_{u} \int_0^1 \mathbb{E}_{P_t(x|x_0,x_1)} \bigg[ \frac{1}{2} || u_t(x|x_0,x_1) ||^2 \bigg] dt \ \  \textrm{s.t.} \ \ \{P_t(x|x_0,x_1)\}_t \ \ \textrm{are Gaussians}
\label{eq:constrained_min_kinetic}
\end{align}
Instead of solving (\ref{eq:constrained_min_kinetic}) directly, we try to deal with its unconstrained version, i.e., without the Gaussian marginal constraint. 
To this end we utilize the theory of stochastic optimal control with the fixed initial state~\citep{raginsky19}, which we adapted for our purpose below in Lemma~\ref{lemma:raginsky}. 

In Lemma~\ref{lemma:raginsky}, we adopt the Dirac's delta function $g(\cdot) = \delta_{x_1}$ for the terminal cost to ensure that the SDE $dx_t = u_t(x_t|x_0,x_1) dt + \sigma dW_t$ with initial state $x_0$ lands at $x_1$ as the final state. Then the unconstrained version of (\ref{eq:constrained_min_kinetic}), which is perfectly framed as an optimal control problem in Lemma~\ref{lemma:raginsky}, has the optimal solution written as:
\begin{align}
u_t(x|x_0,x_1) = \sigma^2 \nabla_x \log \mathbb{E}_{P^{ref}}[\delta_{x_1}|x_t=x] = \sigma^2 \nabla_x \log P^{ref}(x_1|x_t=x)
\label{eq:optimal_u}
\end{align}
which coincides with (\ref{eq:sde_cfm_imf_2}) in Alg.~\ref{alg:sde_cfm_imf}. Now, due to Doob's h-transform~\citep{rogers_williams}, the SDE $dx_t = u_t(x_t|x_0,x_1) dt + \sigma dW_t$ with the choice (\ref{eq:sde_cfm_imf_2}) or (\ref{eq:optimal_u}) admits $\{P^{ref}_t(x|x_0,x_1)\}_t$ as its marginals. In other words, $P_t(x|x_0,x_1) = P^{ref}_t(x|x_0,x_1)$, which is Gaussian, meaning that the constrained optimization (\ref{eq:constrained_min_kinetic}) and its unconstrained version essentially solve the same problem. 

Noting that (\ref{eq:sde_cfm_imf_1_p}) and (\ref{eq:sde_cfm_imf_2}) in Alg.~\ref{alg:sde_cfm_imf} are equivalent to (\ref{eq:sde_cfm_imf_min_kinetic_1_p_1}) and (\ref{eq:sde_cfm_imf_min_kinetic_1_p_2}) in Alg.~\ref{alg:sde_cfm_imf_min_kinetic}, we conclude that Alg.~\ref{alg:sde_cfm_imf} and Alg.~\ref{alg:sde_cfm_imf_min_kinetic} are equivalent.
\end{proof}

\begin{lemma}[Stochastic optimal control with fixed initial state; Adapted from Theorem 2.1 in~\citep{raginsky19}]
Let $P^b$ be the path measure of the SDE: $dx_t = b_t(x)dt + \sigma dW_t$, starting from the fixed initial state $x_0$. For the stochastic optimal control problem with the immediate cost $\frac{1}{2\sigma^2}||b_t(x_t)||^2$ at time $t$ and the terminal cost $\log 1/g(x_1)$ at final time $t=1$ for any function $g$, the cost-to-go function defined as:
\begin{align}
J_t^b(x) := \mathbb{E}_{P^b}\Bigg[ \int_t^1 \frac{1}{2\sigma^2}||b_t(x_t)||^2 -\log g(x_1) \bigg| x_t = x \Bigg]
\end{align}
has the optimal control (i.e., the optimal drift $b_t(x)$)
\begin{align}
b^*_t(x) = \arg\min_b J^b_t(x) = \sigma^2 \nabla_x \log \mathbb{E}_{P^{ref}}[g(x_1)|x_t=x]
\end{align}
where $P^{ref}$ is the Brownian path measure with diffusion coefficient $\sigma$. 
\label{lemma:raginsky}
\end{lemma}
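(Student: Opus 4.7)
The plan is to attack this via dynamic programming and the associated Hamilton-Jacobi-Bellman (HJB) equation, and then linearize the resulting nonlinear PDE through a Hopf-Cole (logarithmic) transformation so that the answer pops out as a Feynman-Kac representation under the reference measure. Concretely, define the value function $V_t(x) := \min_b J^b_t(x)$ with terminal condition $V_1(x) = -\log g(x)$ read off from the terminal cost $\log 1/g(x_1)$. The generator of the controlled SDE acting on a smooth test function $f$ is $\mathcal{L}^b f = b \cdot \nabla f + \tfrac{\sigma^2}{2} \Delta f$, so the dynamic programming principle yields the HJB equation
\begin{equation*}
\partial_t V_t(x) + \min_{b \in \mathbb{R}^d} \Big\{ \tfrac{1}{2\sigma^2} \|b\|^2 + b \cdot \nabla V_t(x) \Big\} + \tfrac{\sigma^2}{2} \Delta V_t(x) = 0.
\end{equation*}

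Next I would carry out the pointwise inner minimization. The Hamiltonian is strictly convex and quadratic in $b$, so setting its $b$-gradient to zero gives the feedback law $b^*_t(x) = -\sigma^2 \nabla V_t(x)$. Substituting this back collapses the HJB equation to the nonlinear PDE $\partial_t V_t - \tfrac{\sigma^2}{2} \|\nabla V_t\|^2 + \tfrac{\sigma^2}{2} \Delta V_t = 0$ with terminal data $V_1 = -\log g$. This is precisely the viscous Hamilton-Jacobi form that linearizes under the Hopf-Cole change of variable $\phi_t(x) := \exp(-V_t(x))$. A direct computation using $\nabla \phi = -\phi \nabla V$ and $\Delta \phi = -\phi \Delta V + \phi \|\nabla V\|^2$ shows the quadratic gradient term cancels exactly, leaving the linear backward heat equation $\partial_t \phi_t + \tfrac{\sigma^2}{2} \Delta \phi_t = 0$ with terminal condition $\phi_1(x) = g(x)$.

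The last step is to invoke Feynman-Kac: the unique (sufficiently regular) solution of that backward heat equation is $\phi_t(x) = \mathbb{E}_{P^{ref}}[g(x_1) \mid x_t = x]$, since the generator $\tfrac{\sigma^2}{2}\Delta$ corresponds exactly to the driftless Brownian reference measure $P^{ref}$ with diffusion coefficient $\sigma$. Unwinding the Hopf-Cole transform gives $V_t(x) = -\log \mathbb{E}_{P^{ref}}[g(x_1) \mid x_t = x]$, and plugging this into the feedback law $b^*_t(x) = -\sigma^2 \nabla V_t(x)$ produces the claimed formula
\begin{equation*}
b^*_t(x) = \sigma^2 \nabla_x \log \mathbb{E}_{P^{ref}}[g(x_1) \mid x_t = x].
\end{equation*}

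The step I expect to be the main obstacle is the rigorous justification rather than the calculus: verifying that the HJB equation actually holds (a verification theorem argument, or an application of Itô's formula to $V_t(x_t)$ along any admissible $b$ to show $J^b_t \geq V_t$ with equality for the candidate $b^*$), and ensuring enough regularity of $V$ and $\phi$ so that the Hopf-Cole manipulations and Feynman-Kac representation are valid — in particular, that $g$ is non-negative and sufficiently integrable against the Brownian transition kernel. Following \citep{raginsky19}, I would handle this with a standard verification argument: plug the candidate $V$ into Itô's formula, use the explicit cancellations above to show the resulting local martingale has zero drift under $b^*$ and nonnegative drift under any other admissible $b$, and close by taking expectations (and applying a localization/truncation argument on $g$ if needed).
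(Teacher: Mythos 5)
Your proposal is correct and takes essentially the same route as the paper's proof: both hinge on the pointwise quadratic minimization giving $b^*_t(x) = -\sigma^2\nabla_x V_t(x)$, the Hopf--Cole (logarithmic) transformation linking the resulting viscous Hamilton--Jacobi equation to the linear backward heat equation, and the Feynman--Kac representation under the driftless Brownian reference measure. The only difference is direction of travel --- the paper starts from the Feynman--Kac solution $h_t(x)=\mathbb{E}_{P^{ref}}[g(x_1)\mid x_t=x]$ of the linear PDE, sets $v_t=-\log h_t$, and recognizes the resulting nonlinear PDE as the HJB equation, whereas you start from the HJB and linearize; your explicit flagging of the verification/regularity step is a point the paper leaves implicit.
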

\begin{proof}
We utilize the (simplified) Feynman–Kac formula, saying that the PDE,
\begin{align}
\frac{\partial h_t(x)}{\partial t} + \mu_t(x)^\top \nabla_x h_t(x) + \frac{1}{2}\textrm{Tr}\Big( \sigma^2 \nabla_x^2 h_t(x) \Big) = 0, \ \ h_1(\cdot) = q(\cdot)
\end{align}
has a solution $h_t(x) = \mathbb{E}[q(x_1)|x_t=x]$ where the expectation is taken with respect to the SDE, $dx_t =\mu_t(x_t)dt + \sigma dW_t$. 

Now we plug in $\mu_t=0$, $q = g$, and let $v_t(x) := -\log h_t(x)$. Note that $h_t(x)$ is always positive since $g$ is positive, and hence $v_t(x)$ is well defined. Then by some algebra, we see the following PDE:
\begin{align}
\frac{\partial v_t(x)}{\partial t} + \frac{1}{2}\textrm{Tr}\Big( \sigma^2 \nabla_x^2 v_t(x) \Big) = \frac{\sigma^2}{2} ||\nabla_x v_t(x)||^2, \ \ v_1(\cdot) = -\log g(\cdot)
\end{align}
has a solution  $v_t(x) = -\log \mathbb{E}_{P^{ref}}[g(x_1)|x_t=x]$. 
Note that we can write $\frac{\sigma^2}{2} ||\nabla_x v_t(x)||^2$ as the following variational form,
\begin{align}
\frac{\sigma^2}{2} ||\nabla_x v_t(x)||^2 = -\min_{b} \ b^\top \nabla_x v_t(x) + \frac{||b||^2}{2\sigma^2}
\label{eq:quad_optim}
\end{align}
where the minimum is attained at $b^* = -\sigma^2 \nabla_x v_t(x)$. So $v_t(x)=-\log \mathbb{E}_{P^{ref}}[g(x_1)|x_t=x]$ is the solution to:
\begin{align}
\frac{\partial v_t(x)}{\partial t} + \frac{1}{2}\textrm{Tr}\Big( \sigma^2 \nabla_x^2 v_t(x) \Big) = -\min_{b} \ b^\top \nabla_x v_t(x) + \frac{||b||^2}{2\sigma^2}, \ \ v_1(\cdot) = -\log g(\cdot)
\label{eq:hjb}
\end{align}
Note that (\ref{eq:hjb}) is the Hamilton-Jacobi-Bellman equation for the stochastic optimal control problem with the immediate cost $\frac{1}{2\sigma^2}||b_t(x_t)||^2$ and the terminal cost $\log 1/g(x_1)$. 
In fact, $v_t(x)=-\log \mathbb{E}_{P^{ref}}[g(x_1)|x_t=x]$ is the (optimal) value function, and the optimal control, i.e., the solution to (\ref{eq:quad_optim}) in a function form, is: $b^*_t(x) = -\sigma^2 \nabla_x v_t(x) = \sigma^2 \nabla_x \log \mathbb{E}_{P^{ref}}[g(x_1)|x_t=x]$.
\end{proof}

\end{document}